\newcommand{\R}{\mathbb{R}}  % reals
\newcommand{\E}{\mathbb{E}}  % expectation
\newcommand{\calC}{\mathcal{C}}
\newcommand{\calU}{\mathcal{U}}
\newcommand{\calQ}{\mathcal{Q}}
\newcommand{\SW}{\mathit{SW}}
\newcommand{\cSW}{\mathit{cSW}}
\newcommand{\MR}{\mathit{MR}}
\newcommand{\MC}{\mathit{MC}}
\newcommand{\mC}{\mathcal{C}}
\newcommand{\mU}{\mathcal{U}}
\newcommand\abs[1]{\left| #1  \right|}
\newcommand{\Rgrt}{\mathit{Rgrt}}
\newcommand{\bfr}{\mathbf{r}}
\newcommand{\ignore}[1]{}
\newcommand{\commentout}[1]{} % equiv to \ignore{}
\mathchardef\mhyphen="2D
\newtheorem{theorem}{Theorem}
\newtheorem{proposition}{Proposition}
\newtheorem{observation}{Observation}
\newtheorem{lemma}{Lemma}
\newtheorem{definition}{Definition}
\icmltitlerunning{Optimizing Long-term Social Welfare in Recommender Systems}
\begin{document}

\twocolumn[
\icmltitle{Optimizing Long-term Social Welfare in Recommender Systems:\\A Constrained Matching Approach}

\icmlsetsymbol{student}{*}
\icmlsetsymbol{intern}{$\dagger$}

\begin{icmlauthorlist}
\icmlauthor{Martin Mladenov}{goo}
\icmlauthor{Elliot Creager}{to,ve,student}
\icmlauthor{Omer Ben-Porat}{tec,intern}
\icmlauthor{Kevin Swersky}{goo}
\icmlauthor{Richard Zemel}{to,ve}
\icmlauthor{Craig Boutilier}{goo}
\end{icmlauthorlist}

\icmlaffiliation{to}{University of Toronto}
\icmlaffiliation{ve}{Vector Institute}
\icmlaffiliation{goo}{Google Research}
\icmlaffiliation{tec}{Technion}

\icmlcorrespondingauthor{Martin Mladenov}{mmladenov@google.com}

\icmlkeywords{Recommenders, Trustworthy Machine Learning, Combinatorial Optimization, ICML}

\vskip 0.3in
]

\printAffiliationsAndNotice{\icmlEqualContribution}

\begin{abstract}
Most recommender systems (RS) research assumes that a user's utility can be maximized independently of the utility of the other agents (e.g., other users, content providers). In realistic settings, this is often not true---the dynamics of an RS ecosystem couple the long-term utility of all agents. In this work, we explore settings in which content providers cannot remain viable unless they receive a certain level of user engagement.
We formulate the recommendation problem in this setting as one of \emph{equilibrium selection} in the induced dynamical system, and show that it can be solved as an \emph{optimal constrained matching} problem. 
Our model ensures the system reaches an equilibrium with maximal social welfare supported by a sufficiently diverse set of viable providers.
We demonstrate that even in a simple, stylized dynamical RS model, the standard \emph{myopic} approach to recommendation---always matching a user to the best provider---performs poorly.
We develop several scalable techniques to solve the matching problem, and also draw connections to various notions of user regret and fairness, arguing that these outcomes are fairer in a utilitarian sense.
\end{abstract}

\section{Introduction}
\label{sec:intro}
Investigations of various notions of \emph{fairness} in machine learning (ML) have shown that,
without due care, applying ML in many domains can result in biased outcomes that disadvantage
specific individuals or groups \cite{Dwork,barocasBigData}.
Content \emph{recommender systems (RSs)},
which match users
to content (e.g., news, music, video), typically rely on ML to
predict a user's interests 
to recommend ``good'' content
\citep{grouplens:cacm97,jacobson2016music,covington:recsys16}. 
Since these predictions
are learned from past behavior, many issues of ML fairness arise in RS settings \citep{beutel_etal:kdd19}.

One aspect of ``fairness'' that has received little attention emerges when one considers the \emph{dynamics
of the RS ecosystem}. Both users and content providers have particular incentives for engaging with an RS
platform---incentives which interact, via the RS \emph{matching policy},
to couple the long-term utility of agents on both sides of this
content ``marketplace.''  Some work has looked at the impact of RS policies on provider welfare 
\cite{SinghJoachims2018}, on long-term user and provider
metrics (e.g., using RL \citep{chen_etal:2018top,slateQ:ijcai19}
or assessing various other phenomena \cite{Ribeiro_etal:FAT20,celma:longtail2010}). However,
little work has looked
at the interaction of the two on the ecosystem dynamics induced by the RS policy (though there are some exceptions, e.g., \citep{benporat_etal:nips18}, which are discussed below).

In this work, we focus on \emph{provider} behavior using
a stylized model of a content RS ecosystem in which providers 
require a certain degree of user engagement (e.g., views, time spent, satisfaction) to remain \emph{viable}. This required
degree (or ``threshold'') of engagement
reflects their incentives (social, economic, or otherwise) to participate in the RS;
and if this threshold is not met, 
a provider will withdraw from the platform (i.e., their content is
no longer accessible).
If this occurs, user segments for whom
that provider's content is ideal may be disadvantaged, leading such users to derive less utility from
the RS. 
This may often arise, say, for users and providers of \emph{niche} content.

Typical RS policies are \emph{myopic}: given a user request or query, it returns the provider that is (predicted to be) best-aligned with that query. In our
model, myopic policies often drive the dynamical system to a poor equilibrium,
with low \emph{user social welfare} and poor provider diversity. By contrast,
a more \emph{holistic} approach to matching requests to providers
can derive much greater user welfare. 
We formulate policy optimization in our model
as a constrained matching problem, which 
\emph{optimizes for the socially optimal equilibrium of the ecosystem}.
We develop scalable techniques for
computing these policies, and show empirically that they produce much
higher social welfare than myopic policies, \emph{even in this relatively stylized model}.
Such policies also lead to greater provider diversity---even though
our objective only involves \emph{user} utility---by (implicitly) ``subsidizing'' some providers
that would not remain viable under the myopic policy.  
We examine tradeoffs between user regret and social welfare in this model,
showing that user maximum regret tends to be quite low,
especially with \ignore{realistic} utility functions exhibiting diminishing returns. Finally, we draw connections to notions of ML fairness and argue that
the outcomes induced by our matching-based policies are fairer in a utilitarian sense.

\section{Challenges for Myopic Content Matching}
\label{sec:challenges}
We first introduce an abstract, but general, formalization of dynamic content RSs.
Two key elements drive ecosystem utility
and dynamics. First, users derive (possibly non-linear) utility from the \emph{collection}
or \emph{sequence}
of content recommended to them, not just from the individual items.
Second, providers require minimum levels of user engagement to remain \emph{viable}, i.e., incentivized to engage with the RS. 
Within this model, we show how
\emph{myopic} RS policies often poorly serve both users and providers,
and discuss policy types that overcome this.

\subsection{A Formalization of Dynamic Recommendations}
\label{sec:formalization}

We assume an RS (or platform) that matches \emph{users} $\calU$ to \emph{content providers} $\calC$. Users issue 
\emph{queries} for desired content, drawn from
some space $\calQ$. We assume $\calU, \calC, \calQ$ are finite.
Given user $u$'s query $q_u$, the RS returns a provider
$c$ from which $u$ derives \emph{immediate reward} $r(q_u,c)$ reflecting match
quality. We often
assume the existence of some \emph{latent space} $X\subseteq\R^d$ that is used to
represent both content and queries. This is common, say, in
collaborative filtering (CF), where $X$ is an
embedding space constructed by
matrix factorization \cite{salakhutdinov-mnih:nips07} or neural CF
\cite{he_etal:www17,beutel_etal:wsdm18}). 
For
$c, q_u \in X$ we let $r(q_u,c) = q_u^T c$ be immediate reward.

User queries are received asynchronously and immediately matched to a provider,
giving a discrete-event
dynamical system over time periods $1,2,\ldots,t,\ldots$  At each time $t$,
a user $u[t]$, drawn from some distribution $\rho(\calU)$, issues a query $q_u[t]\in X$,
itself drawn from distribution $P_{u[t]}(X)$ reflecting
that user's interests. 
The RS \emph{matches} $q_u[t]$ to some 
provider $c[t]\in \calC$, and $u[t]$ derives reward $r(q_u[t],c[t])$.\footnote{For
ease of exposition, we do not distinguish the different content \emph{items} offered by
provider $c$. Nothing fundamental changes in our approach if we match queries to specific
items if each item is associated with a provider.}
We assume the RS has \emph{complete knowledge} of the 
location in $X$ of $c$ and $q_u$,
and that these embeddings do not change over time.
While unrealistic in practice---most RSs continually update user and content representations, and may engage in active exploration to help assess user interests---
the problems we address under this assumption are further exacerbated by incomplete information. We discuss this further below.

Let
$h_t = ((u[i],q_u[i],c[i]))_{i\leq t}$ be a length $t$ history of past queries and recommendations, $H[t]$ be the
set of all such histories, and $H[\ast] = \cup_{t<\infty} H[t]$. An RS \emph{policy}
${\pi: H[\ast]\times (\calQ,\calU)\rightarrow\Delta(\calC)}$ maps a history and a query to a distribution
$\pi(h_t,q_u[t])$ over providers. 

Generally, an RS aims to maximize user engagement. We assume
that over long horizons, user engagement is optimized by maximizing \emph{user utility},
which in turn drives sustained user satisfaction with the platform. 
We assume that $u$'s utility is a (possibly non-linear) function $f$ of the reward
sequence  $\bfr_u$ obtained over some horizon. 
For instance, this might be the cumulative sum of rewards; recency-weighted cumulative reward; or a function (e.g., sigmoid)
that captures various behavioral phenomena, such as decreasing marginal returns. We discuss several such functions in Sec.~\ref{sec:formulations}.
However, even with this user focus, the RS must also address the provider
incentives---in our case,
viability---since providers offer the quality content needed for a thriving ecosystem.
Each provider $c$ has a \emph{viability threshold} $\nu_c$ over the amount of
(possibly recency-weighted)
user engagement (e.g., visits, time spent) generated for it by the RS. 
Periodically, $c$ compares its overall engagement against $\nu_c$
and abandons the platform---perhaps stochastically--if this threshold is not reached (we detail specific forms of this function below). 
If a provider becomes unviable, it can no longer
be matched by the RS to any user query.

\subsection{Suboptimality of Myopic Policies}
\label{sec:illustrations}

Before developing our methods, we use two simple examples to
illustrate why typical \emph{myopic} recommendation policies may be suboptimal.

%%%%%%%%%%%%%%%%%%%%%%%%%%%%%%%%%%%%%%%%%%%%%%%%%%%%%%%%%%%%%%%%%%%%%%%%%%%%
% FIGURE 1
\begin{figure}[t!]
  \centering
  \includegraphics[width=.42\textwidth]{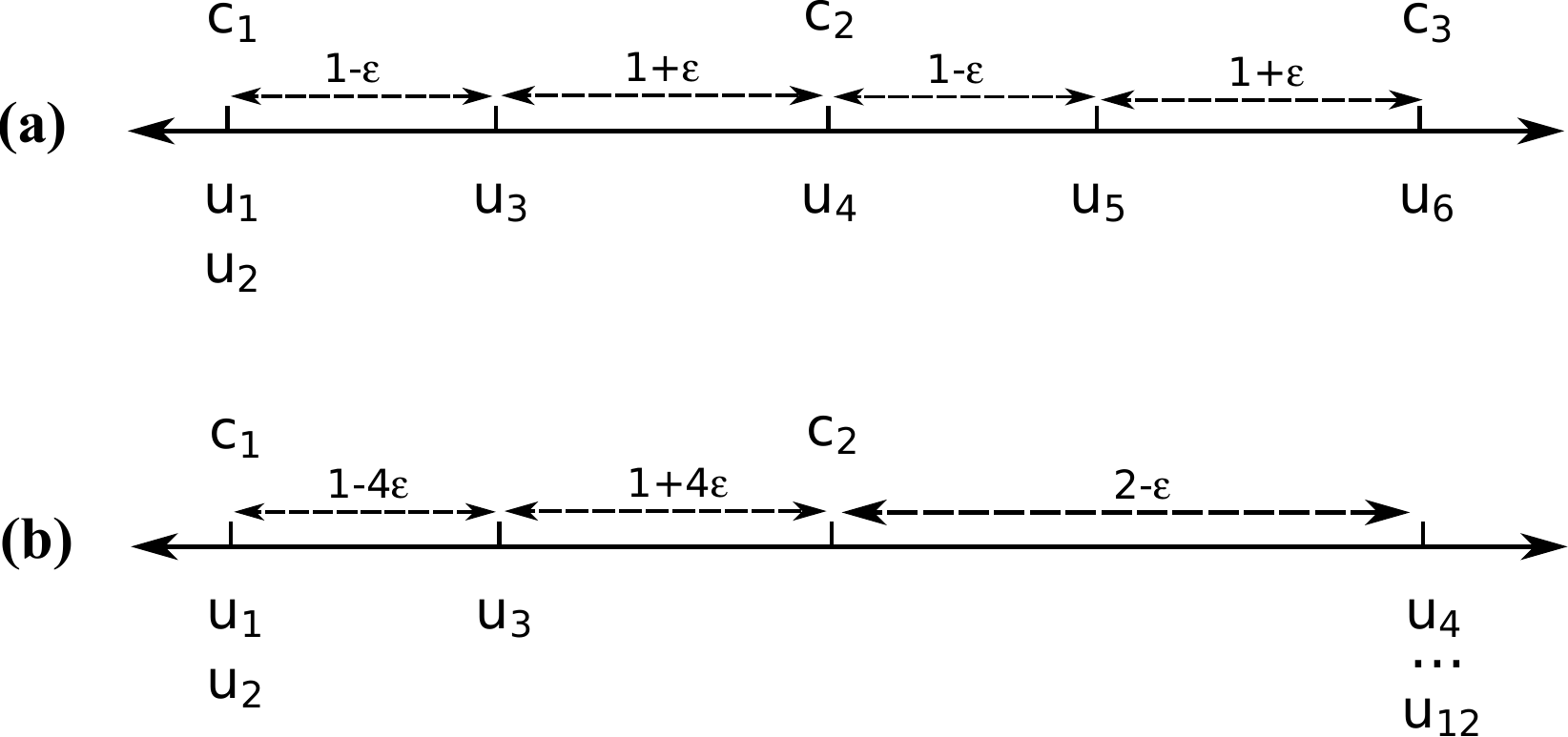}
  \caption{Two 1D examples: (a) User $u_i$'s reward for being matched to provider $c_j$ is 2 less the distance between them (e.g., $u_1$ has reward 2 for $c_1$; $u_3$ has reward $1+\varepsilon$ (resp., $1-\varepsilon$) for $c_1$ (resp., $c_2$). We equate reward and utility, and a user with her query.
  assume that each user issues a
  single query per period, and that each provider requires 2 user impressions in each period to remain
  viable at the next period. (b) Similar to (a) except that $c_1$ requires 2 impressions and $c_2$ requires 10.
  }
  \label{fig:SW_example1}
\end{figure}
%%%%%%%%%%%%%%%%%%%%%%%%%%%%%%%%%%%%%%%%%%%%%%%%%%%%%%%%%%%%%%%%%%%%%%%%%%%%

Typical RSs behave \emph{myopically}: when query $q_u$ is received, it is matched to the
provider giving maximum user reward $c^\ast_{q_u} = \arg\max_{c\in\calC} r(q_u, c)$. In the example in
Fig.~\ref{fig:SW_example1}(a), each provider has a viability threshold of $2$ and is viable in the initial period.
The myopic policy first matches each of $u_1, u_2, u_3$ to $c_1$; $u_4, u_5$ to $c_2$; and $u_6$ to $c_3$.
This gives total immediate reward
$10+2\varepsilon$ according to the reward function ${r(u, c) = 2 - |u - c|}$ for each recommendation.
However, since $c_3$ receives only one user, it is no longer viable at the next period. Hence, at all subsequent periods, the RS must
match $u_6$ to $c_2$ (with reward 0),
attaining a long-run per-period average reward of $8+2\varepsilon$.

A \emph{non-myopic policy} can obtain a long-run average reward of $10-2\varepsilon$ by matching $u_3$ to $c_2$
and $u_5$ to $c_3$ at each period. Under this policy,
$c_3$ remains viable, allowing $u_6$ to receive
reward 2 (rather than 0) in perpetuity. This comes at a small price to $u_3$ and $u_5$, each of whom receive
$2\varepsilon$ less per period. This matching \emph{subsidizes} $c_3$ by matching its content to
$u_5$ (who would slightly prefer provider $c_2$). This subsidy leaves
$c_2$ vulnerable, so it too is subsidized by the match with $u_3$. Indeed,
this
matching is \emph{optimal} for any horizon of at least two periods---its average-per-period 
\emph{user social welfare} (or total reward) is maximized. The maximum loss of utility experienced by
any user at any period w.r.t.\ the myopic policy is quite small, only $2\varepsilon$ (by both $u_3, u_5$)---this is the \emph{maximum (user) regret} of the policy. Finally, this policy keeps all providers viable in perpetuity; the set of viable providers $V=\calC$ is an \emph{equilibrium} of the
dynamical system induced by the policy. By contrast, the myopic policy reaches an equilibrium $V'=\{c_1,c_2\}$ that has fewer viable providers.

Consider now a policy that matches $u_3$ to $c_3$ at each period, but otherwise behaves myopically. This induces 
the same equilibrium $V=\calC$ as the optimal policy by subsidizing $c_3$ with $u_3$.
However, this policy---though improving $u_5$'s utility by $2\varepsilon$ (and her regret to $0$)---gives a reward of $0$ to $u_3$ (whose regret is $1+\varepsilon$). This policy not
only has higher max regret, it also has significantly lower welfare of $9+\varepsilon$.

While not the case in this example, the policy that optimizes social welfare need not minimize max regret. 
Fig.~\ref{fig:SW_example1}(b) considers a case where the viability threshold differs from each of the two providers.
The myopic policy initially matches $\{u_1, \ldots, u_3\}$ to $c_1$ and $\{u_4, \ldots, u_{12}\}$ to $c_2$, after which $c_2$ is no longer viable (and $\{u_4, \ldots, u_{12}\}$ receive no further reward). 
Thus per-period reward is $5+4\varepsilon$ (and max regret is $\varepsilon$.) 
The welfare-optimal policy subsidizes $c_2$ by matching $u_3$, increasing welfare marginally by $\varepsilon$ to $5+5\varepsilon$, but also increasing max regret (see $u_3$) to $8\varepsilon$. 
This illustrates the trade-off between social welfare maximization and max-regret minimization.

These examples show that maximizing user
social welfare often requires that the RS take action
to \emph{ensure the long-run viability of providers}. 
The example from Fig~\ref{fig:SW_example1}(a) shows
that such considerations need not be explicit, but simply emerge as 
a \emph{by-product of maximizing user welfare alone}. This also promotes
diversity among viable providers that can in some sense be
interpreted as being ``more fair'' to the user population. In particular, it creates a smaller gap between the (long-term) utility values attained by different users across the spectrum of possible topic interests. However, as with provider diversity, this type of fairness is not part of the \emph{explicit} objective that drives the RS policy---rather it is implicit, with fairness emerging as a consequence of trying to maximize \emph{overall} user welfare. We discuss connections to work on ML fairness further below.
For a richer illustration of this, see Fig.~\ref{fig:figure2}.

%%%%%%%%%%%%%%%%%%%%%%%%%%%%%%%%%%%%%%%%%%%%%%%%%%%%%%%%%%%%%%%%%%%%%%%%%%%%%%
% FIGURE 2
\newcommand{\synsubfigwidth}{0.3\textwidth}
\begin{figure*}[th!]
  \centering
  \begin{subfigure}[t]{\synsubfigwidth}
    \noindent\resizebox{\textwidth}{!}{
      \includegraphics{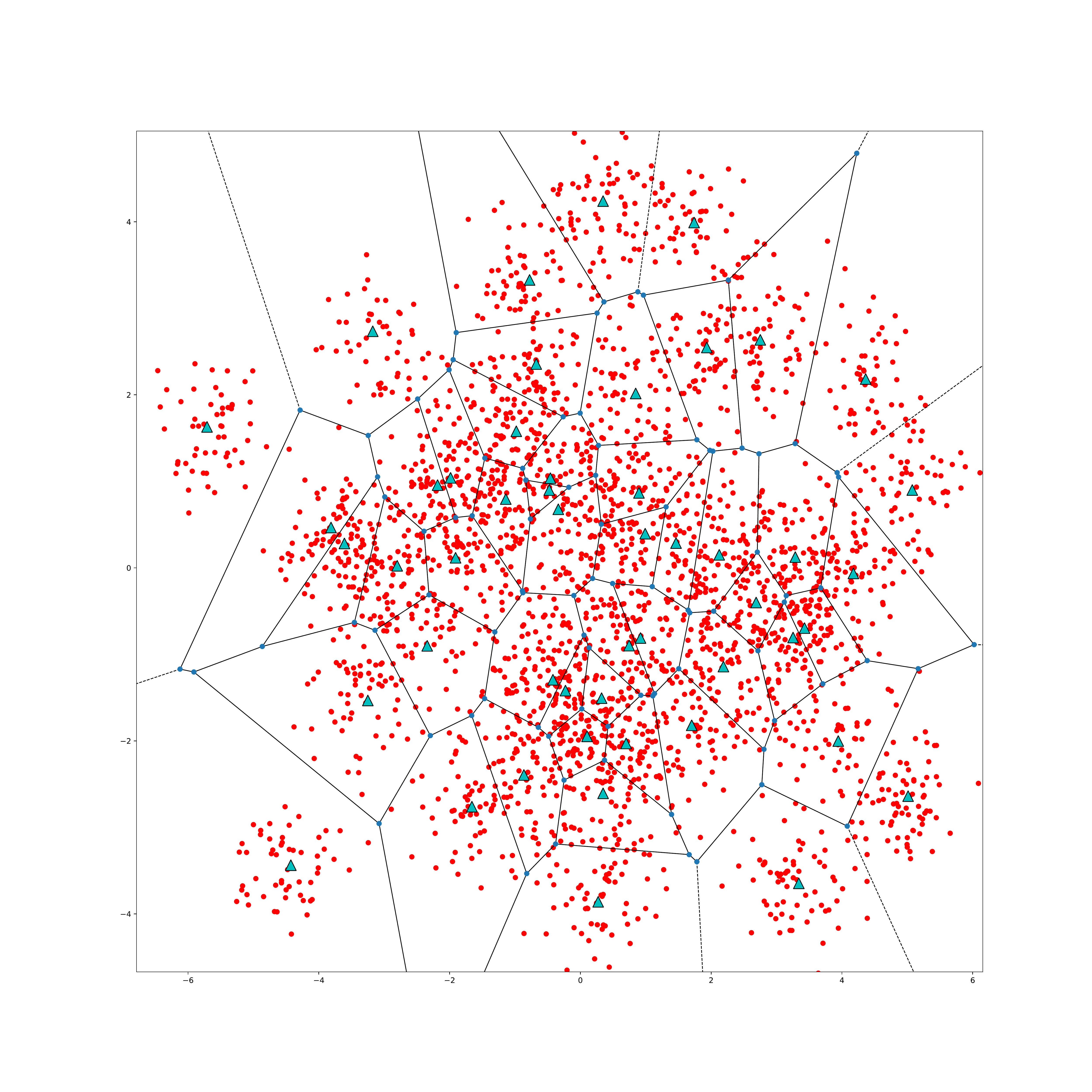}
    }
  \end{subfigure}%
  \hfill
  \begin{subfigure}[t]{\synsubfigwidth}
    \noindent\resizebox{\textwidth}{!}{
      \includegraphics{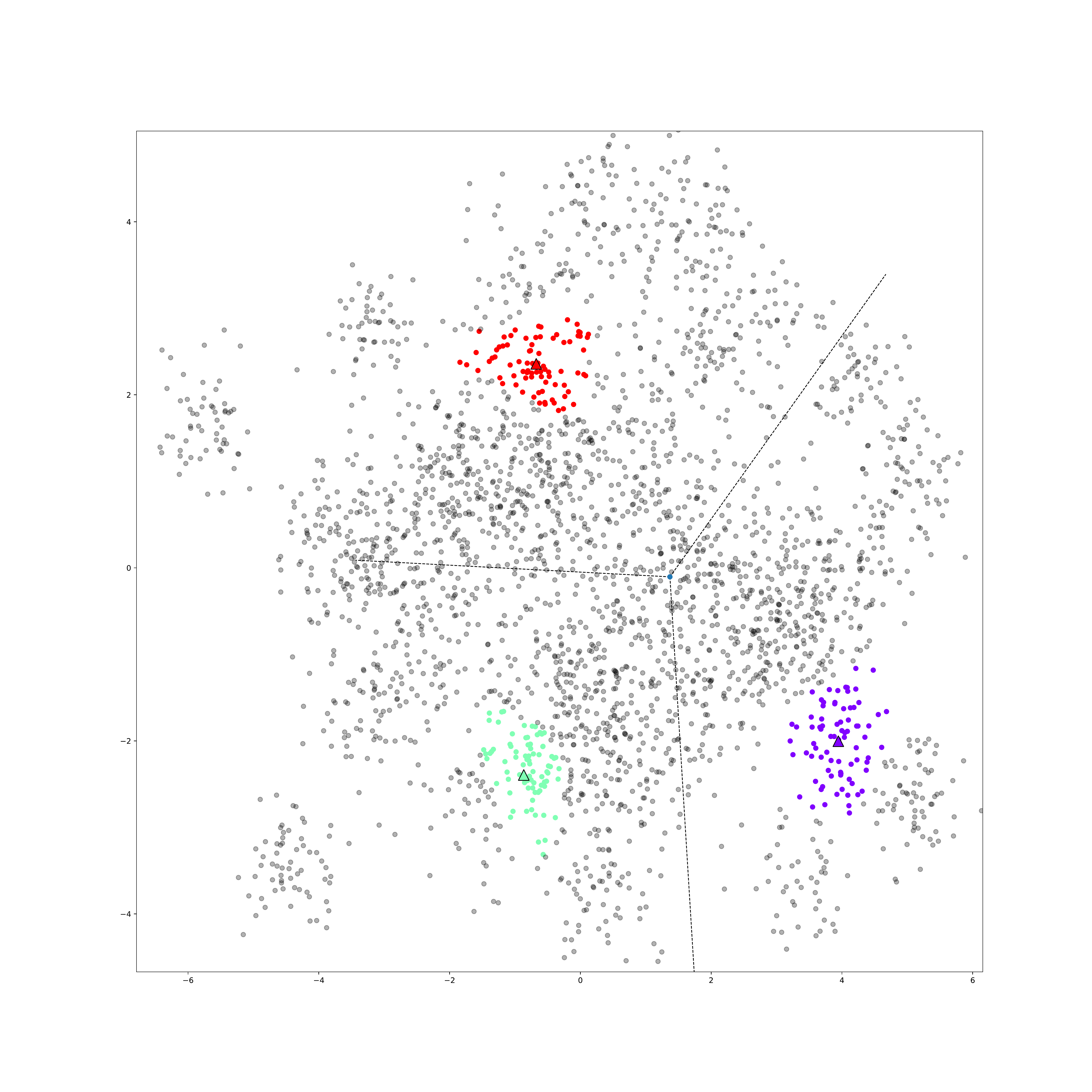}
    }
  \end{subfigure}
  \hfill
  \begin{subfigure}[t]{\synsubfigwidth}
    \noindent\resizebox{\textwidth}{!}{
      \includegraphics{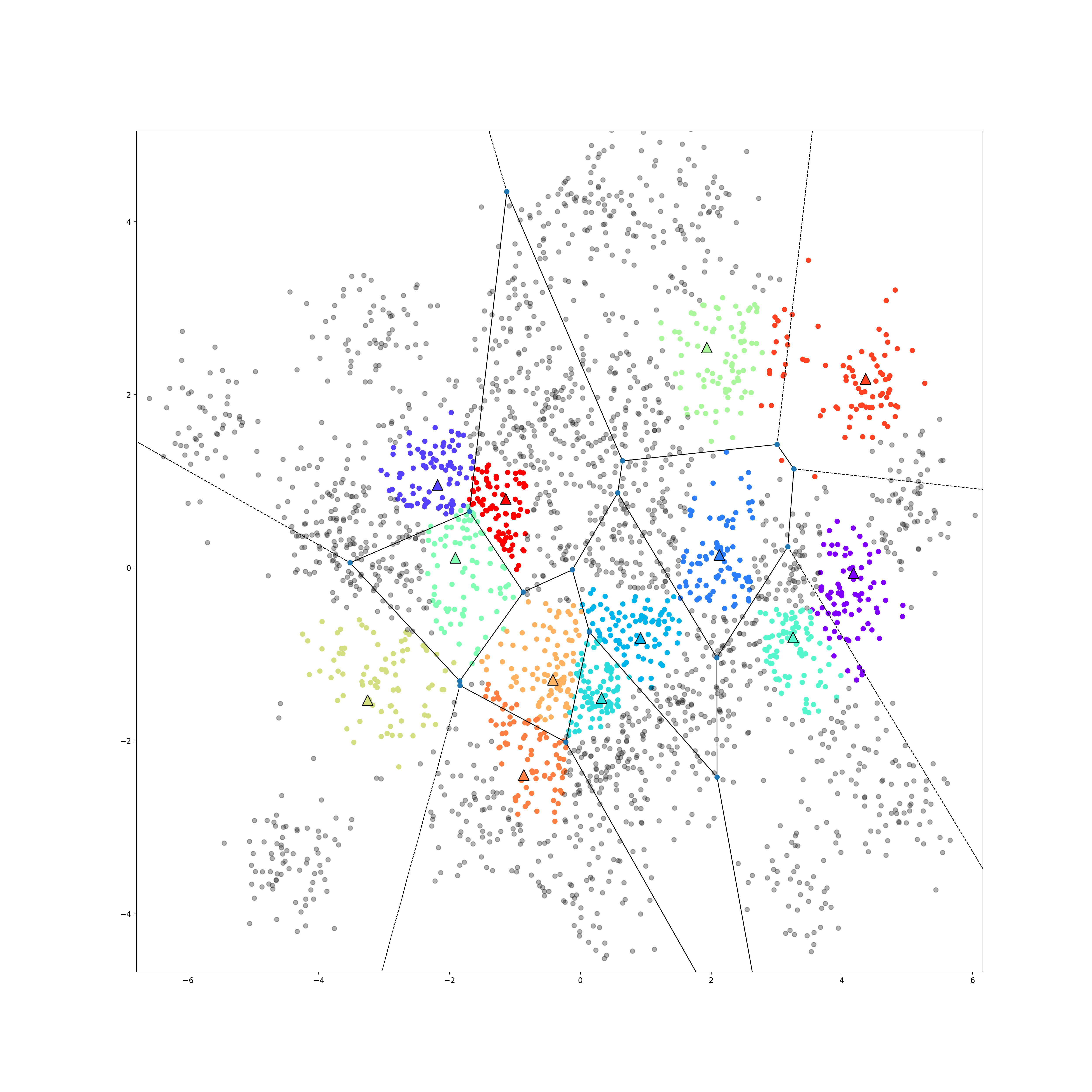}
    }
  \end{subfigure}%
  \caption{
    Recommendation using dynamics-informed constraints keeps more content providers viable in the long run, improving social welfare amongst users.
    The first panel shows providers (blue triangles) and users (red dots) embedded in a 2D topic space.
    The second  and third  panel respectively show equilibrium under Myopic and proposed linear program (LP) recommendation;
    here users whose most-preferred provider has dropped out of the ecosystem are shaded in grey, while the remaining users are colored according to their most-preferred provider.
  }
  \label{fig:figure2}
\end{figure*}
%%%%%%%%%%%%%%%%%%%%%%%%%%%%%%%%%%%%%%%%%%%%%%%%%%%%%%%%%%%%%%%%%%%%%%%%%%%%%

\subsection{Matching Optimization for Recommendation}
\label{sec:matching}

We now formalize our objectives and optimization approach.
For ease of exposition, we assume an \emph{epoch-based} decision problem:
time steps are grouped into
epochs of fixed length $T$, with user utility and provider viability both determined at the end of each epoch. Let $Q$ denote the induced distribution
over queries during any epoch (since user behavior is stationary, so is $Q$).
Other forms of
user/provider evaluation do not impact the qualitative nature of our
results---some require different forms of analysis and optimization, while others carry through easily. For example, if providers use recency-weighted engagement, no substantial changes are needed; but if their evaluation occurs on a continual (not epoch-based) basis, more intricate equilibrium analysis is required and optimization becomes more online in nature. 

A policy $\pi$ induces a stochastic dynamical system over a state space, where the state encodes user utility and provider viability at the end of each epoch. Let random variable (RV) $e^\pi_t(c)$ be provider $c$'s engagement at time $t$ under $\pi$, and $E^\pi_k(c)$ its cumulative engagement during epoch $k\geq 1$. If
$E^\pi_k(c) \geq \nu_c$, $c$ remains viable at epoch $k+1$, otherwise it abandons the platform. Let $V^\pi_k$ be the set of providers that are viable at the end of
epoch $k$, We assume $V^\pi_0 = \calC$.

Let (RV) $\bfr^\pi_k(u)$ be user $u$'s reward sequence
in epoch $k$ under $\pi$, and 
$U^\pi_k(u) = f(\bfr^\pi_k(u))$ be $u$'s utility. \emph{Social welfare} generated by
$\pi$ at epoch $k$ is ${\SW^\pi_k = \sum_{u\in\calU} U^\pi_k(u)}$.
(Long-run) \emph{average social welfare} is
${\SW^\pi_\infty = \lim_{k\rightarrow\infty} [\sum_k \SW^\pi_k] / k}$.
The average utility $U^\pi_\infty(u)$ of $u$ under $\pi$ is defined
analogously. If $\pi^\ast_u$ is the policy that maximizes
$u$'s average utility, then $u$'s \emph{regret} under $\pi$ is
$\Rgrt^\pi(u) = U^{\pi^\ast_u}_\infty(u) - U^\pi_\infty(u)$.
The \emph{maximum regret} of $\pi$ is
$\MR(\pi) = \max_{u\in\calU} \Rgrt^\pi(u)$.
Let $\MC(u)\subseteq \calC$ be those providers matched by
the myopic policy to queries $\calQ_u$ with positive support
in $P_u$ when all providers are viable.
Under the mild assumption that, for any $u$, there is a policy
that keeps $\MC(u)$ viable,
$U^{\pi^\ast_u}_\infty(u) = \E [f(\bfr^\ast(u))]$ is a \emph{constant} where
$\bfr^\ast(u)$ is the realized reward sequence when any 
$q_u$ generates reward $\arg\max_c r(q_u,c)$.

Our interest in long-run performance leads to a focus on policy
behavior in equilibrium w.r.t.\ provider viability.
We say $V\subseteq\calC$ is an \emph{equilibrium} of $\pi$ if, for
some $k\geq 0$, $V^\pi_{k+j} = V$ for all $j\geq 0$ (i.e.,
the providers $V^\pi_k$ that are viable after epoch $k$ remain
viable in perpetuity). 
Since most large-scale RSs have massive numbers of users, we assume
that the number of queries received at each point in $X$ (or within a suitable set of small subregions of $X$) during each
epoch is exactly its expectation. This can be justified by appeal to the law of
large numbers for epochs of sufficient duration.\footnote{We discuss relaxations
of this assumption below.}

Under these assumptions, the recommendation policy that maximizes
average social welfare has an especially simple form. Specifically, there is
an optimal \emph{stationary policy} (w.r.t.\ epochs) which can
be formulated as an optimal matching problem under viability constraints.
Consider the following \emph{single-epoch decision problem}:
\begin{align}
\max_{\pi} & \sum_{u \in {\cal U}}
\E [ f(\bfr^\pi(u)) | \pi ] \label{eq:match1Obj}\\
\text{s.t.\ } & \pi_{q,c} > 0 \text{ only if } 
  \sum_{u, q_u} \overline{Q}(q_u)\pi_{{q_u},c} \geq \nu_c, \,\, \forall \ q, c
  \label{eq:match1Constr1}
\end{align}
Here $\pi$ is a vector of matching variables $\pi_{{q_u},c}$, denoting
the proportion of queries $q_u$ to match to $c$.\footnote{This can 
also be interpreted as a stochastic policy, which is the natural
practical implementation.
When user utility is time-dependent, we sometimes allow the policy to be
non-stationary \emph{within} the epoch, writing $\pi_{{q_u},c,t}$ for $t\leq T$.}
$\overline{Q}(q_u)$ is the expected number of queries of the type $q_u$,
but note that the method holds for any non-negative function of $q_u$.
The expectation of $u$'s utility $f$ is
taken w.r.t.\ user activation at each $t$ in the epoch, the user query
distribution and $\pi$; i.e., the $t$-th component of $\bfr^\pi(u)$ is
distributed as
\begin{equation}
\label{eq:reward_dist}
r^\pi_t \sim \rho(u)\sum_{q_u \in \calQ} P_u(q_u) \sum_c \pi_{{q_u},c} r(q_u,c).
\end{equation}
Objective~(\ref{eq:match1Obj}) optimizes social welfare over the epoch,
while constraint~(\ref{eq:match1Constr1}) ensures that any \emph{matched} provider remains viable at the end of the epoch.

We show that applying this single-epoch policy $\pi$ across all
epochs is guaranteed to optimize average user social welfare.
Since the expectation $\overline{Q}(q_u)$ is realized exactly, $\pi$ induces
an equilibrium during the first epoch. Moreover, the optimization ensures
it has maximum welfare, i.e., is
the optimal stationary policy. Finally, while a non-stationary $\pi'$
may initially improve welfare relative to $\pi$, its equilibrium
welfare cannot exceed that of $\pi$; so any welfare improvement is transient and cannot increase (long-run) average welfare.
The stationary $\pi$ given by~(\ref{eq:match1Obj})
\emph{anticipates the equilibrium it induces} and only matches to providers that
are viable in that equilibrium. This obviates the need to consider more
complex policies based on the underlying Markov decision process
(we discuss richer policy classes below).

\section{Solving the Matching Optimization}
\label{sec:formulations} 

We now develop several practical methods for 
solving the optimization~(\ref{eq:match1Obj}) to generate optimal
policies.
We consider formulations that accommodate
various forms of user utility: simple linear, cumulative
reward; a discounted model that reflects decreasing marginal
returns; and non-linear models (of which sigmoidal utility is a motivating example).
We then show how social welfare and regret can be traded off, and
 briefly describe how the models can be made more robust to uncertainty in
the user query stream.

\subsection{Additive Utility: A Linear Programming Model}
\label{sec:simpleLP}

We first develop a linear programming (LP) model that applies when
user utility is suitably additive, capturing both cumulative
reward and a \emph{discounted engagement} model
that reflects a natural form of decreasing marginal return.

Let $\alpha \in \mathbb{R}^T_+$ be a vector of non-negative weights. We
assume linear user utility
$f_\alpha: \bfr \mapsto \sum_{t=1}^T \alpha_t \bfr_t$,
where user utility in an epoch is the $\alpha$-weighted sum of immediate rewards
(cumulative reward is a special case with all ${\alpha_t = 1}$). Moreover, we consider a class of non-stationary policies that take time itself as their only history feature, ${\pi_{h_t, q_u, c} = \pi_{t, q_u, c}}$ (if we allow $\pi$ to depend on arbitrary statistics, the problem is a full POMDP).

The expected welfare of $\pi$ over the epoch is 
\[\sum_u \E [ f_\alpha(\bfr^\pi(u)) | \pi ] = \sum_{t=1}^T \sum_{u\in \calU}\sum_{q_u \in \calQ}\sum_{c\in\calC}\alpha_t \pi_{t, q_u, c}\bar{r}(q_{u}, c),  \]
where $\bar{r}(q_u, c) = \rho(u)P_u(q_u)r(q_u, c)$.
Since (per-epoch) social welfare $\SW^\pi$ is a linear function of $\pi$, Eq.~\eqref{eq:match1Obj} can be reformulated as a mixed-integer linear program (MILP):
\begin{maxi}[3]
{\pi, y}{\sum_{t=1}^T \sum_{u\in \calU}\sum_{q_u \in \calQ}\sum_{c\in\calC}\    \alpha_t\pi_{t,q_u, c}\bar{r}(q_{u}, c)}
{\label{eq:ilp_linear_util}}{}
\addConstraint{\sum_c\pi_{q_u, c, t}}{=1\quad}{\forall t \in [1:T], q_u\in \calQ}
\addConstraint{\pi_{q_u, c, t}}{\leq y_c\quad}{\forall t \in [1:T], q_u\in Q, c\in \calC}
\addConstraint{\sum_{u, q_u, t} \overline{Q}(q_u)\pi_{{q_u},c,t}}{\geq \nu_c y_c\quad}{\forall c\in \calC}
\end{maxi}
where matching variables $\pi_{t, q_u, c}$ 
represent the 
stochastic policy, and provider-viability
variables $y_c$ are in $\{0, 1\}$ (for cumulative reward, dependence of $\pi$
on $t$ can be removed).
Problem~\eqref{eq:ilp_linear_util} is akin to \emph{max-utility constrained facility location}, where user-query-time tuples act as customers and providers act as facilities. Related problems have been investigated in various
forms
\cite{chan17facility,li19Facility,nemhauser77Facility}. 
All variants (including ours) have basic facility location as a special case (where the constraints are trivial (i.e., $\nu_c = 0$) and are thus NP-hard. Even though their formulations are similar, each has different approximation properties. The combination of objective and constraints  we consider has not,
we believe, been studied in the literature.   

Our problem can be approximated in polynomial time up to a constant factor:
\begin{theorem}
Problem~\eqref{eq:ilp_linear_util} can be approximated up to factor $\frac{1}{e}$ in polynomial time.  
\end{theorem}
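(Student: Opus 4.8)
The plan is to pass to the natural LP relaxation of~\eqref{eq:ilp_linear_util}, exploit the fact that the matching variables $\pi$ may stay fractional (the policy is explicitly allowed to be stochastic), and round only the provider-viability indicators $y_c$. Concretely, I would first relax $y_c\in\{0,1\}$ to $y_c\in[0,1]$, solve the resulting LP in polynomial time, and obtain a fractional solution $(\pi^\ast,y^\ast)$ whose value $Z^\ast$ upper-bounds the optimum of~\eqref{eq:ilp_linear_util}. The combinatorial core is then the choice of the set $S=\{c:y_c=1\}$ of providers the RS commits to keeping viable: once $S$ is fixed, the residual problem of maximizing welfare over $\pi$ supported on $S$ subject to $\sum_{u,q_u,t}\overline{Q}(q_u)\pi_{q_u,c,t}\ge\nu_c$ for each $c\in S$ is a transportation-type LP solvable exactly in polynomial time, so no rounding of $\pi$ is ever needed.

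Next I would note that, ignoring viability, the welfare obtainable from an open set $S$ is the monotone submodular set function $S\mapsto\sum_{t,q_u}\alpha_t\max_{c\in S}\bar r(q_u,c)$ (a weighted sum of ``best available provider'' terms, each submodular in $S$), and that $Z^\ast$ is at most a fractional extension of this function evaluated at $y^\ast$, because the constraint $\pi^\ast_{q_u,c,t}\le y^\ast_c$ forces each query to split its mass only among partially-open providers. I would then round $y^\ast$ by opening each provider $c$ independently with probability $y^\ast_c$ (switching to a pipage/swap-rounding variant if a small amount of negative correlation is needed to control feasibility), reassign queries to their best surviving provider (or via $\pi^\ast$ restricted to $S$), and estimate the expected welfare by the usual correlation-gap/contention-resolution bounds, using $(1-\tfrac1k)^k\ge e^{-1}$ and $\prod_c(1-\pi^\ast_{q_u,c,t})\le e^{-\sum_c\pi^\ast_{q_u,c,t}}=e^{-1}$ for each fixed $(q_u,t)$. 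Derandomizing by the method of conditional expectations with a pessimistic estimator then gives a deterministic polynomial-time algorithm.

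The main obstacle is the viability lower bounds, which are exactly what separates this problem from max-facility-location and which do not define a down-closed family: an independently opened provider $c$ may receive less than $\nu_c$ mass under the induced assignment, and dropping such providers can cascade. I would handle this by showing the rounded $S$ admits a feasible viable assignment whose welfare is within the claimed factor of the LP's $S$-restricted value --- either via a contention-resolution scheme that respects the lower bounds, or by re-solving the residual transportation LP on $S$ and bounding the welfare lost when under-served opened providers are boosted to their thresholds --- together with the easy safeguard that the instance is never rendered infeasible (e.g.\ by always retaining the myopic providers $\MC(u)$). Verifying that this viability-repair step loses no more than the promised $\tfrac1e$ overall is where the real work lies; the LP-relaxation upper bound and the submodularity of the welfare function are routine by comparison.
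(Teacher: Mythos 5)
Your route is genuinely different from the paper's, and the gap sits exactly where you concede ``the real work lies''---which is, unfortunately, the entire content of the theorem. The viability constraints $\sum_{u,q_u,t}\overline{Q}(q_u)\pi_{q_u,c,t}\ge \nu_c y_c$ are \emph{lower} bounds, so the feasible sets of open providers do not form a down-closed family, and none of the machinery you invoke (independent or pipage rounding, correlation gap, contention resolution, the $(1-1/k)^k\ge e^{-1}$ coverage calculation) applies as stated: after rounding, an opened provider may receive less than $\nu_c$ mass, closing it can cascade, and ``boosting'' under-served providers to their thresholds diverts query mass whose welfare cost you never bound. The submodular function you identify, $S\mapsto\sum_{t,q_u}\alpha_t\max_{c\in S}\bar r(q_u,c)$, is the \emph{unconstrained} facility-location objective; its submodularity is indeed routine, but it is not the function the algorithm must reason about. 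What is needed is submodularity of the \emph{constrained} welfare $\cSW(C)$---the value of the residual transportation LP in which every $c\in C$ must be kept at or above $\nu_c$---and that is precisely the paper's main technical lemma (Theorem~\ref{thm:additive}), proved by a delicate combinatorial argument: relocation multigraphs between the optimal matchings for $C$, $C\cup\{c_1\}$ and $C\cup\{c_0,c_1\}$, acyclicity and single-sink structure forced by optimality, and a path-recombination procedure at junction nodes (Algorithm~\ref{alg:green_flow}) that constructs a feasible witness matching for $C\cup\{c_0\}$. Your proposed ``safeguard'' of always retaining $\MC(u)$ does not repair the rounding either: keeping those providers viable is an assumption used only to define the regret benchmark, not a property a rounded set can be forced to have without further, unanalyzed welfare loss.

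At a high level you do share the paper's skeleton---fix the provider set, solve the residual LP exactly (no rounding of $\pi$ is needed, as you correctly observe via total unimodularity/transportation structure), and derive the $1/e$ guarantee from submodularity of welfare as a set function of the providers. But the paper gets its guarantee by running greedy directly on provider sets, with the submodularity of $\cSW$ doing all the work, whereas your LP-relax-and-round plan replaces that one hard ingredient with a promissory note (a contention-resolution or repair scheme ``respecting the lower bounds'') that is not supplied and is not standard. As written, the proof does not go through; to salvage your route you would have to prove exactly the kind of structural statement about constrained matchings that the paper proves combinatorially.
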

The core of the proof 
(see Appendix~\ref{sec:proof_additive}) is to consider the problem of
computing \emph{maximum
constrained welfare}, $\cSW(C)$, given a \emph{fixed} set $C\subseteq\calC$ of
viable providers. $\cSW(C)$ is the maximum of Eq.~\eqref{eq:ilp_linear_util} when the provider variables are ``set'' as $y_c = 1$ iff $c\in C$; i.e., we
find the best stochastic matching given that all
and only providers in $C$ remain viable. 
With the integer variables removed, the problem becomes a polynomially
sized LP. We show that $\cSW$ is submodular in the provider set $C$,  
i.e., $\cSW(C\cup\{c^\prime, c\}) - \cSW(C\cup\{c^\prime\}) \leq \cSW(C\cup\{c\} ) -\cSW(C)$ for any $c, c^\prime \in \calC, c\subset \calC$. This means that social welfare can be approximately optimized by greedily adding providers until no further viable providers can be added.

A more efficient alternative to greedy provider selection is to directly round the results of the LP relaxation of Eq.~\eqref{eq:ilp_linear_util}. This approach has been found to perform well on similar problems (see e.g. \citet{jones15facility}). In preliminary experiments, w that the LP-rounding heuristic performs indistinguishably from the greedy method in terms of social welfare. Given its superior computational performance, we only evaluate the LP rounding approach in our experiments (denoted LP-RS) in Sec.~\ref{sec:experiments}. Note the techniques for scaling up submodular maximization and linear programming are relatively well understood 
(e.g. ~\citet{JMLR:v17:mirzasoleiman16a, boyd_admm}), hence we will not discuss large-scale implementations within this paper.   

The weighted linear utility model provides us with a  mechanism for modeling
\emph{decreasing marginal returns} in user utility, specifically, by
discounting user rewards by setting $\alpha_t = \gamma^{t-1}$ for some $\gamma \in (0, 1]$. Such discounting is one simple way to model the realistic assumption of
decreasing marginal utility with increased content consumption 
(e.g., a good match for a second user query has less impact on her utility than
it does for the first query). 
This model makes it easier to
maintain provider viability and improve social welfare with lower
individual regret (see Sec.~\ref{sec:experiments}).

\subsection{Non-linear Utility: A Column Generation Model}
\label{sec:columnGeneration}

While additive utility provides us with a useful model
class that can be solved approximately in polynomial time,
it cannot express important structures such as sigmoidal utility
\cite{kahneman_prospect:1979}. Optimal matching with such non-linearities
can be challenging.

Suppose now that $f$ is nonlinear, e.g., $f(\bfr^\pi(u)) = \sigma(\bfr^\pi(u) + \beta)$.
The facility location literature provides little guidance for such
problems. While approximation results are known for 
concave problems~\cite{vahab03facility}, it is unclear if
these apply with constraints, a setting that, we believe, has not been investigated.

Our approach linearizes the problem to form a MILP that can subsequently be relaxed to an LP
as follows. Let $C\in \calC^k$ be a $k$-tuple of providers. A pair $(q_u, C)\in \calQ \times \calC^k$ represents a possible answer to user $u$'s $k$ queries identical to $q_u$ by the provider tuple $C$. We call such a tuple a 
\emph{star} $q_uC$. For each star, we use a variable $\pi_{q_uC}$ to represent the policy's match to $q_u$. The linearized objective is then:
\begin{equation}
    \operatorname{maximize}_{\pi, y}\sum_{u\in \calU}\sum_{q_u \in \cal Q}\sum_{C \in \calC^k}\pi_{q_uC}\bar{\sigma}(q_{u}, C)\ ,
\label{eq:ilp_nonlinear_util}
\end{equation}
where $\bar{\sigma}(q_{u}, C)=\rho(u)P_u(q_u)\sigma(q_{u}, C)$. The linear constraints from Eq.~\eqref{eq:ilp_linear_util} are adapted to
these stars. See
Appendix~\ref{sec:non_linear_opt} for the complete
optimization formulation and our use of
column generation to solve this much larger, more complex problem.

\subsection{Incorporating Regret Trade-offs}
\label{sec:regretopt}

As discussed in Sec.~\ref{sec:illustrations}, pure social welfare maximization can
sometimes induce large maximum regret (i.e., high regret for some users). 
Fortunately, max regret can be traded off 
against social welfare directly in the optimization.
Let $\mu_u$ be a constant denoting $u$'s maximum utility  $U^{\pi^\ast_u}_\infty(u)$
(see Sec.~\ref{sec:matching}). Since the term 
${U^\pi(u) = \sum_{t=1}^T \sum_{q_u \in \calQ}\sum_{c\in\calC}\pi_{q_u, c, t}\bar{r}(q_{u}, c, t)}$
denotes $u$'s expected utility
in LP~\eqref{eq:ilp_linear_util}, we can express $u$'s realized regret as a
variable ${\Rgrt_u \mu_u - U^\pi(u)}$ for all $u\in\calU$. Letting variable $\MR$ represent the max
regret induced by the policy, we can add the term $-\lambda\MR$ to the
objective in MILPs~\eqref{eq:ilp_linear_util} and \eqref{eq:ilp_nonlinear_util} where constant $\lambda$ controls the desired welfare-regret
trade-off. Constraining ${\MR \geq \Rgrt_u, \forall u\in\calU}$ ensures $\MR$ takes
on the actual max regret induced by $\pi$.

\subsection{Extensions of the Model}
\label{sec:robust}

Our matching optimization relies on some restrictive, unrealistic assumptions about real-world RSs.
However, our stylized model directly informs  approaches to
more realistic models (and can sometimes be adapted directly).

\textbf{Robustness and RL:} Strong assumptions about query distribution variance motivated
our equilibrium arguments. Even if these assumptions do not hold, our LP formulation
can be extended to generate
``high-probability'' optimal equilibria. For example, lower-confidence bounds on the traffic expected for each provider under the induced policy can be constructed, and
viability thresholds inflated to allow for a margin of safety. This can be 
encoded formally or heuristically within our LP model. A full RL approach offers
more flexibility by dynamically adjusting the queries matched to a provider as
a function of the state of all providers (e.g., exploiting
query variance to opportunistically make additional providers viable,
or ``salvage'' important providers given, say, an unanticipated drop in traffic).
The combinatorial nature of
the state space (state of engagement of each provider) complicates any RL model. Adaptive online methods (e.g., as used in ad auctions \cite{mehta:onlinematching2013}) can exploit optimization techniques
like ours to dynamically adjust the
matching without the full complexity of RL.

\textbf{Incomplete Information:} Our model assumes complete knowledge of both user utility and
``interests'' (via
the query distribution, the reward function and the utility function), and of a provider's position
in ``topic space'' and its utility (via its viability).
In practice, these quantities are learned from data and constantly updated,
are generally never known with full precision, and are often very uncertain. Indeed, one
reason ``unfair'' recommendations arise is when the RS does not undertake sufficient exploration
to discover diversity in user interests (especially for niche users). Likewise an RS is
usually somewhat (though perhaps less) uncertain of a provider's content distribution.
Incorporating this
uncertainty into our model can (a) generate more robust matchings, and (b) drive
exploration strategies that uncover \emph{relevant} user interests and utility given ecosystem
viability constraints.
These are important directions for future work.

\textbf{Richer Dynamics:} 
More complex dynamics exist in real RS ecosystems than are captured by our
simple model, including: arrival/departure of providers/users; evolving user interests
and provider topics/quality; richer provider responses
(e.g., not just abandonment, but quality reduction, content throttling, topic shifts); and
strategic behavior by providers. Our model serves only as a starting point for richer
explorations of these phenomena.

\section{Experiments}
\label{sec:experiments}
We evaluate our LP-rounding method (Sec.~\ref{sec:simpleLP}), dubbed LP-RS, for additive utility models (we consider
both cumulative and discounted reward). 
Preliminary experiments
show that LP-RS runs faster and performs
better than the greedy/submodular approach, so we do not present results for the latter here.  We
provide a detailed evaluation of column generation for
nonlinear models (Sec.~\ref{sec:columnGeneration}) in 
Appendix~\ref{sec:column-gen-expers}.
We compare the policy $\pi_\text{LP}$ based on LP-RS to
a myopic baseline policy\footnote{
Appendix \ref{sec:stochastic_policy_ablation} compares to an
affinity-aware 
stochastic policy.
} $\pi_\text{My}$
w.r.t.\ social welfare, max regret and provider diversity/viability,
assessing both on several domains
using a
RS ecosystem simulator that captures provider viability dynamics.
We outline the simulator, describe our datasets and trained embeddings,
then discuss our findings.
We use the \textsc{RecSim} framework for simulating RS environments \cite{ie2019recsim}, and the main experiments can be found in the \textsc{RecSim} codebase.\footnote{\href{https://github.com/google-research/recsim/blob/master/README.md\#Papers}{\url{https://github.com/google-research/recsim/blob/master/README.md\#Papers}}
}

\subsection{Ecosystem}\label{sec:ecosystem}
The ecosystem simulator captures the provider viability dynamics described in Sec.~\ref{sec:formalization}.
At each epoch,
the RS observes a single query per user, and must serve a ``slate'' of $s$ providers to each user.
A provider $c$ can only occur \emph{once} in each slate
(e.g., the myopic policy rank-orders the $s$ best-fit providers).
The user query at each epoch is sampled as $q_u[t] \sim P_{u[t]}(X)$;
hence, the realized preference changes at each epoch, but the distribution parameters are fixed throughout the simulation.
Providers are static and all viability thresholds have the same value $\nu$.

\subsection{Datasets}\label{sec:datasets}
\paragraph{Synthetic data}
To examine the emergent properties of the ecosystem in a controlled setting, we generate synthetic data in a two-dimensional topic space, like those in Fig.~\ref{fig:figure2}.
The query distribution is a mixture of Gaussians, with
one component for each provider centered at that provider's location
in topic space $X$, and its weight reflecting
the user's affinity for that provider's topic area.
We consider two variants of the mixture model:
    (a) a \emph{uniform} variant where providers/topics are distributed uniformly,
    and all users have the same variance;
    and (b) a \emph{skewed} variant, where (w.l.o.g.) topics near the origin are considered \emph{popular} and receive relatively more users, while topics far from the origin are \emph{niche} with fewer users, but whose users are more \emph{loyal} and exhibit lower variance.
User-provider reward is given by ${f(a, b) = -||a - b||_2}$ (with max value 0).

\paragraph{Movie ratings}\label{sec:movies}
We train an embedding on the Movielens dataset \citep{harper2015movielens} using
non-negative matrix factorization on a sparse matrix of user-movie engagements
(see Appendix~\ref{sec:embeddings-details} for details).
The pair of matrix factors are then used as embedding vectors for users and providers for the simulator (here each provider is a movie).
User-provider rewards are computed as $r(q_u, c) = q_u^Tc$.

\paragraph{Social network}
\label{sec:snap}
We train user and provider embeddings from the SNAP 2010 Twitter follower dataset~\cite{kwak2010twitter}.
This dataset consists of a large list of (followee, follower) pairs, each with a unique account ID.
We designate popular accounts as ``providers,'' and other accounts as ``users,'' then learn a low-dimensional embedding that captures affinity between users and providers via a link prediction task (see Appendix~\ref{sec:embeddings-details} for details).
User-provider rewards are computed as $r(q_u, c) = q_u^Tc$.

% Exploring embedding type %%%%%%%%%%%%%%%%%%%%%%%%%%%%%%%%%%%%%%%%%%%%%%%%%%%%%%
\begin{table}[ht!]
\resizebox{\columnwidth}{!}{
\begin{tabular}{llll}
\toprule
    Data Type   & Method & Avg. Welfare & Viable Providers \\
\midrule
Uniform & Myopic &  -2.41 $\pm$ 0.59 &    43.80 $\pm$ 1.94 \\
 & LP-RS &  \textbf{-1.23 $\pm$ 0.57} & \textbf{48.00 $\pm$ 2.10} \\
Skewed & Myopic & -5.68 $\pm$ 0.61 &    34.40 $\pm$ 1.96 \\
 & LP-RS & \textbf{-3.40 $\pm$ 1.10} & \textbf{42.00 $\pm$ 3.35} \\
\bottomrule
\end{tabular}

}
\caption{
\label{tab:embedding-type}
(Synthetic data) Myopic recommendation performs poorly when observing \emph{skewed} user and provider embeddings (some topic space areas are more popular).
LP-RS improves social welfare \emph{and} number of viable providers for both data types.
Note that zero is the maximum possible welfare in this setting.
}
\end{table}
%%%%%%%%%%%%%%%%%%%%%%%%%%%%%%%%%%%%%%%%%%%%%%%%%%%%%%%%%%%%%%%%%%%%%%%%%%%%%%%%

\subsection{Results}
\label{sec:results}

\newcommand{\subfigwidth}{.3\textwidth}

%%%%%%%%%%%%%%%%%%%%%%%%%%%%%%%%%%%%%%%%%%%%%%%%%%%%%%%%%%%%%%%%%%%%%%%%%%%%%%%%%%
% FIGURE 3
\begin{figure*}[t!]
  \centering
  \begin{subfigure}[t]{\subfigwidth}
    \noindent\resizebox{\textwidth}{!}{
      \includegraphics[width=\subfigwidth]{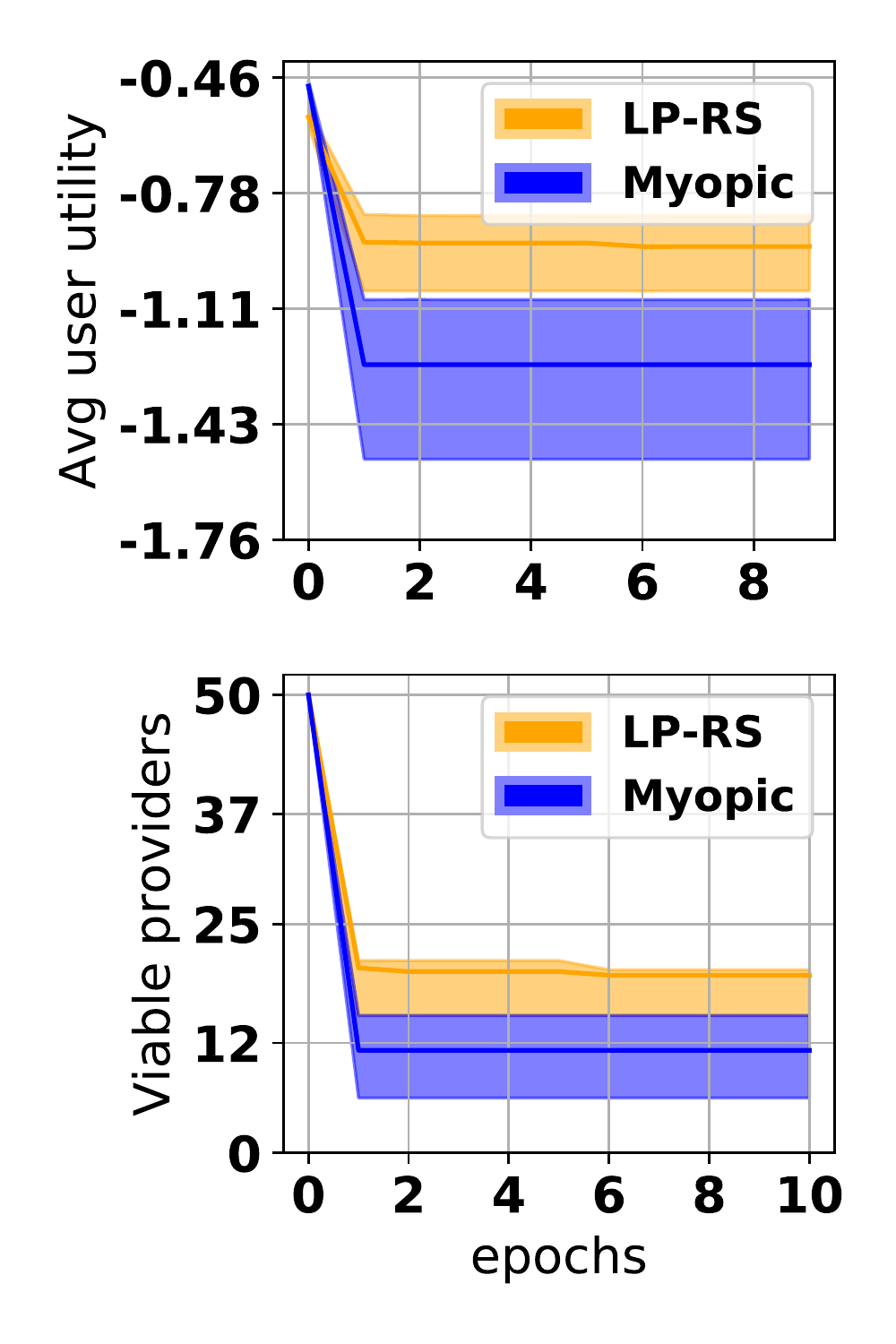}
    }
  \caption{
  Synthetic data embeddings.
  }\label{fig:synthetic-trajectories}
  \end{subfigure}%
%   \hfill
  \begin{subfigure}[t]{\subfigwidth}
    \noindent\resizebox{\textwidth}{!}{
      \includegraphics[width=\subfigwidth]{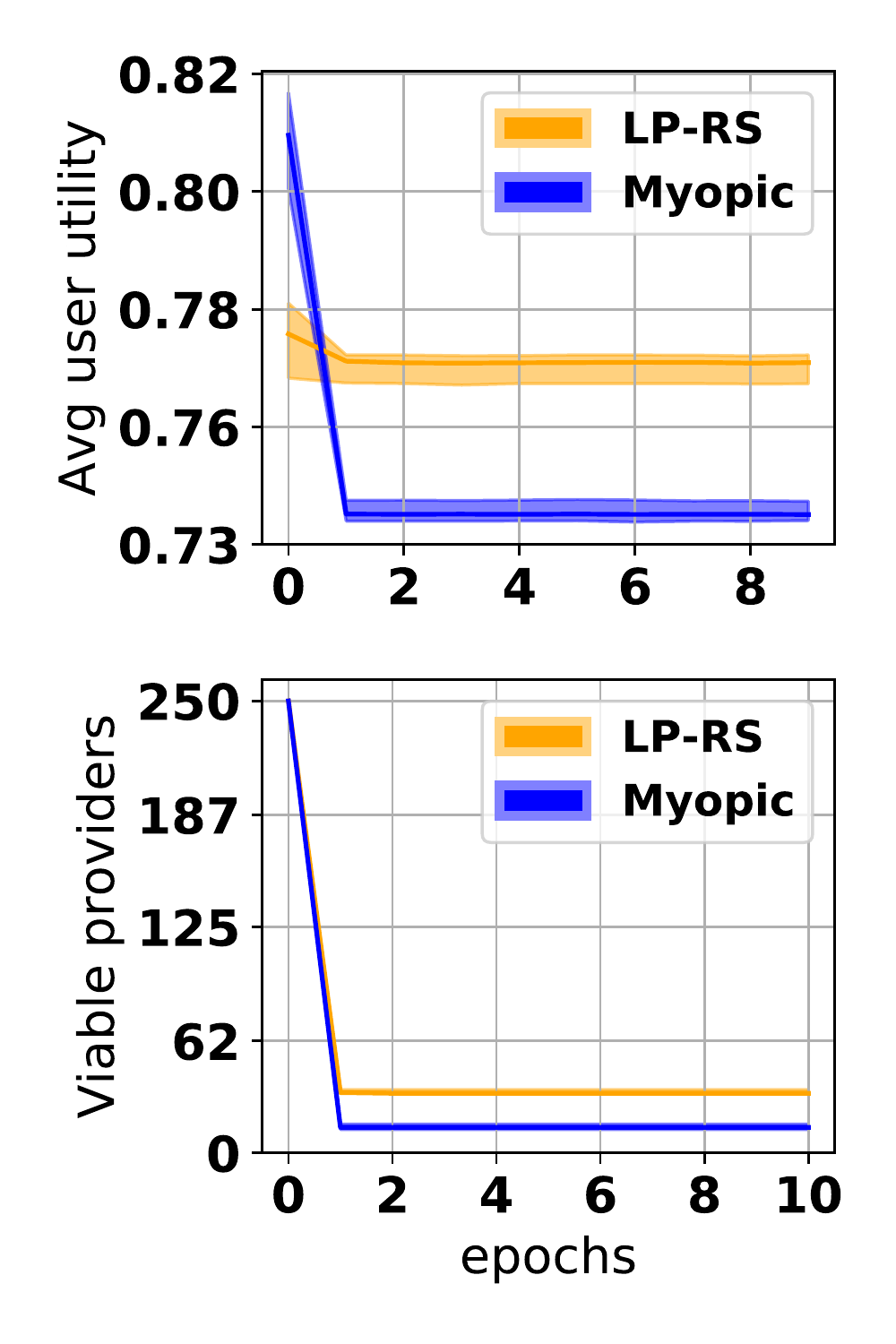}
    }
  \caption{
  Movielens data embeddings.
  }\label{fig:movielens-trajectories}
  \end{subfigure}%
%   \hfill
  \begin{subfigure}[t]{\subfigwidth}
    \noindent\resizebox{\textwidth}{!}{
      \includegraphics[width=\subfigwidth]{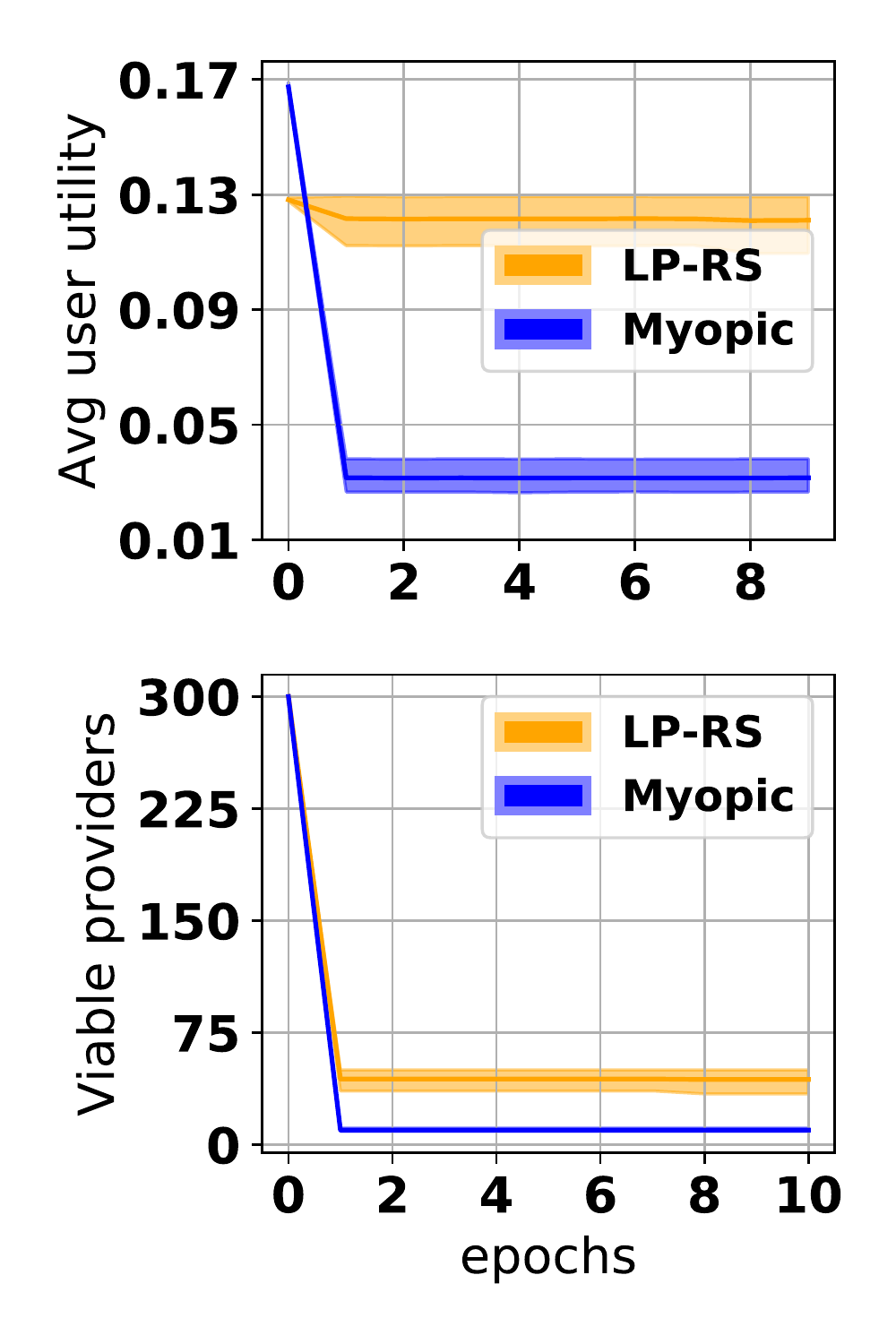} 
    }
  \caption{
  SNAP data embeddings.
  }\label{fig:snap-trajectories}
  \end{subfigure}%  
  \caption{
    Average social welfare and number of viable providers per epoch of simulation. Bold lines are means over 5 seeds, while shaded regions show $25$-th to $75$-th percentiles across runs.
  }\label{fig:trajectories}
\end{figure*}
%%%%%%%%%%%%%%%%%%%%%%%%%%%%%%%%%%%%%%%%%%%%%%%%%%%%%%%%%%%%%%%%%%%%%%%%%%%%%%%%%%

\paragraph{Exploring Embedding Type}
We begin by using synthetically generated embeddings to study how properties of the \emph{embeddings} affect long-term social welfare under $\pi_\text{My}$.
We evaluate $\pi_\text{LP}$ and $\pi_\text{My}$ using the uniform and skewed synthetic embeddings.
The results (Table \ref{tab:embedding-type}) show that when user/provider embeddings are \emph{skewed}, myopic recommendation yields suboptimal user welfare due to less popular providers abandoning the platform.
LP-RS improves welfare and increases the number
of viable providers in both cases.
For the remainder of the paper we use only the skewed variant of the synthetic data.

%%%%%%%%%%%%%%%%%%%%%%%%%%%%%%%%%%%%%%%%%%%%%%%%%%%%%%%%%%%%%%
% FIGURE 4
\newcommand{\histfigwidth}{.5\textwidth}
\begin{figure}[b!]
  \centering
    \noindent\resizebox{\histfigwidth}{!}{
    \hspace{-.5cm}
      \includegraphics[width=\histfigwidth]{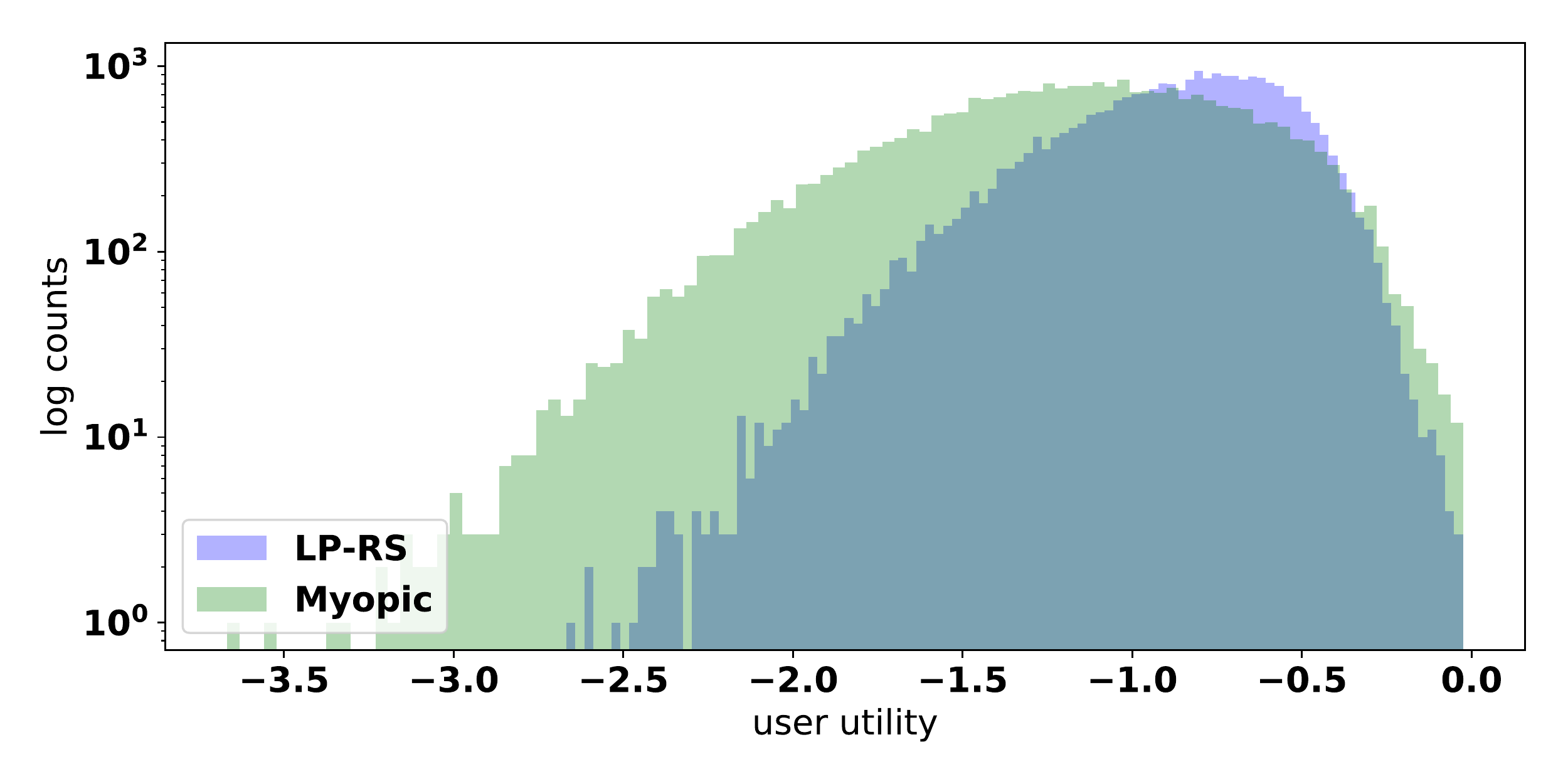}      
    }
  \caption{
    \label{fig:select_user_utility_histogram}
    User utility histogram (Synthetic embeddings).
    The LP recommender improves average social welfare at the expense of some maximum user regret.
  }
\end{figure}
%%%%%%%%%%%%%%%%%%%%%%%%%%%%%%%%%%%%%%%%%%%%%%%%%%%%%%%%%%%%%%%%%%%%%%%%%%%%%%%%

% Tradeoffs in Regret and Welfare %%%%%%%%%%%%%%%%%%%%%%%%%%%%%%%%%%%%%%%%%%%%
\begin{table}[b!]
\resizebox{\columnwidth}{!}{
\begin{tabular}{lllll}
\toprule
$\gamma$ &      Avg. Welfare &        Max Regret & Regret-Welfare Ratio & Surviving Providers \\
\midrule
     0.1 &  18.02 $\pm$ 1.05 &   7.24 $\pm$ 0.77 &      0.40 $\pm$ 0.04 &    47.20 $\pm$ 1.72 \\
    0.18 &  19.79 $\pm$ 1.16 &   7.97 $\pm$ 0.84 &      0.40 $\pm$ 0.04 &    47.20 $\pm$ 1.72 \\
    0.26 &  21.87 $\pm$ 1.28 &   8.84 $\pm$ 0.91 &      0.41 $\pm$ 0.04 &    46.60 $\pm$ 2.15 \\
    0.35 &  24.30 $\pm$ 1.43 &  10.14 $\pm$ 1.18 &      0.42 $\pm$ 0.05 &    45.20 $\pm$ 3.06 \\
    0.43 &  27.17 $\pm$ 1.58 &  11.65 $\pm$ 0.91 &      0.43 $\pm$ 0.03 &    44.00 $\pm$ 3.16 \\
    0.51 &  30.44 $\pm$ 1.79 &  14.19 $\pm$ 1.46 &      0.47 $\pm$ 0.06 &    44.80 $\pm$ 0.98 \\
    0.59 &  34.30 $\pm$ 1.95 &  16.50 $\pm$ 1.73 &      0.48 $\pm$ 0.06 &    43.00 $\pm$ 1.90 \\
    0.67 &  38.67 $\pm$ 2.29 &  20.53 $\pm$ 1.46 &      0.53 $\pm$ 0.06 &    42.60 $\pm$ 1.50 \\
    0.75 &  43.69 $\pm$ 2.61 &  24.61 $\pm$ 1.31 &      0.57 $\pm$ 0.05 &    41.80 $\pm$ 1.72 \\
    0.84 &  49.51 $\pm$ 2.90 &  28.95 $\pm$ 1.11 &      0.59 $\pm$ 0.04 &    39.80 $\pm$ 3.06 \\
    0.92 &  56.15 $\pm$ 3.26 &  33.05 $\pm$ 1.16 &      0.59 $\pm$ 0.04 &    36.80 $\pm$ 3.97 \\
     1.0 &  63.62 $\pm$ 3.58 &  37.89 $\pm$ 1.39 &      0.60 $\pm$ 0.04 &    34.20 $\pm$ 5.08 \\
\bottomrule
\end{tabular}
}
\caption{
\label{tab:tradeoffs}
The trade-off between average user welfare and max user regret depends on the discounting factor $\gamma$.
The RS makes multiple recommendations to each user during each epoch, and a lower $\gamma$ indicates more steeply discounted returns for recommendations beyond the first one.
}
\end{table}
%%%%%%%%%%%%%%%%%%%%%%%%%%%%%%%%%%%%%%%%%%%%%%%%%%%%%%%%%%%%%%%%%%%%%%%%%%%%%%%%

\paragraph{Tradeoffs in Regret and Welfare}
Next we investigate the trade-off between social welfare and individual regret induced by LP-RS at various levels of diminishing returns for user utility, introduced by discounting immediate user rewards as discussed in Sec~\ref{sec:simpleLP}.
Recall that $\Rgrt^\pi(u)$ is defined w.r.t.\ a policy $\pi^\ast_u$ that is 
``tailor made'' for user $u$, one that keeps all providers with
closest affinity to $u$ viable without regard to other users
(and will generally serve most users poorly). Under this definition,
every policy will generally have very high max regret. But this serves as a useful reference point for understanding how the preferences of any single user trade off with long-term
social welfare under a realistic policy $\pi \in \{\pi_\text{My}, \pi_\text{LP}\}$. We discuss the results for $\pi_\text{LP}$ in the following. Results for $\pi_\text{My}$
can be found in Appendix~\ref{sec:extra-table}. 

We expect that steeper rates of diminishing returns (lower $\gamma$) should lead to lower costs---that is, lower individual regret, or sacrifice of individual utility---to generate the optimal provider ``subsidies'' and generally induce more diverse provider sets in equlibrium, which in turn leads to lower max regret. This makes our social welfare objective more aligned with both provider and user fairness.  Table~\ref{tab:tradeoffs} corroborates this intuition, showing that better max-regret-to-average-welfare ratios are generally achieved at low levels of $\gamma$, as are a greater number of viable providers. 

\paragraph{Large-scale Simulations}
We carry out large-scale simulations using the dataset embeddings described in Section \ref{sec:datasets} (see Appendix~\ref{sec:simulation-details} for details).
Fig.~\ref{fig:trajectories} shows results from the simulations, tracking both the number of viable providers and average user utility (i.e., social welfare divided by number of users).
We find that $\pi_\text{LP}$ quickly converges to an equilibrium that sustains more providers than $\pi_\text{My}$.
Average user utility is also improved under $\pi_\text{LP}$.
Fig.~\ref{fig:select_user_utility_histogram} shows an example user utility histogram (aggregated over the entire simulation).
LP-RS has an overall positive impact on the distribution of user utility, relative to the myopic baseline.
However the increase in social welfare comes at the cost of decreased utility for
some of the most well-off users.
See Appendix~\ref{sec:more-histograms} for utility histograms for all datasets.

\section{Related Work}
\label{sec:related}
``Fairness'' in outcomes for users and providers could be described in a variety of (possibly conflicting) ways, and any computational measurement of fairness in this context will surely reflect normative principles underpinning the RS design \citep{binns2018fairness, leben2020normative}.
We have presented a utilitarian view whereby fair outcomes are realized when the average welfare of users is maximized in equilibrium, without taking other factors such as user/provider demographic group membership or diversity of content into account explicitly.
This is consistent with some recent approaches that model long-term fairness by constraining the exploration strategy of the decision maker \cite{joseph2016fairness,jabbari2017fairness} in the sense that fairness and optimality (here w.r.t. expected user rewards) are aligned.
However this approach may not always be appropriate, so we note that the general matching strategy we employ captures the \emph{dynamics} of the RS and could in principle be adapted to encourage other types of ``fair'' outcomes in accordance with different normative principles.

Studies of fairness in ranking typically consider notions of \emph{group fairness}, for example, by regularizing standard ranking metrics to encourage parity of the ranks across demographic groups \cite{Yang2016MeasuringFI,Zehlike2017FAIRAF}.
\citet{Celis2017RankingWF} propose a matching algorithm for ranking a set of items efficiently under a fairness constraint that encourages demographic diversity of items in the top-ranked position.

Research that considers fairness w.r.t. \emph{content providers} (rather than users), fairly allocating the exposure of providers to users \cite{SinghJoachims2018,Biega},
is closely related to our work, especially in some
aspects of its motivation.
These models impose fairness constraints that address ``provider'' concerns, and maximize user utility subject to these constraints. 
In this view, the policy makes commitments to every provider in the ecosystem, while user welfare is secondary to satisfying the fairness constraints. This requires that fairness constraints be crafted very carefully (which is a non-trivial problem \cite{Asudeh2019}) so as to not have an undue impact on user welfare. In our utilitarian framework, provider fairness is justified by user welfare, implicitly incorporating fairness constraints.

Also very related to our work is the work of 
\citet{benporat_etal:nips18}, who develop a game-theoretic
model of RSs whose providers act strategically---by making available
or withholding content---to maximize
the user engagement derived from an RS platform. They analyze
the policies of the RS: using an
axiomatic approach, they prove no RS policy can satisfy certain
properties jointly (including a form of fairness);
and in a cooperative game model, they show the uniqueness and
tractability of a simple allocation policy.
While related, their models are not dynamic and do not
(directly) assess complex user utility models; but they examine
more complex, \emph{strategic} behavior of providers in a way we do not.
\citet{benporat_etal:aaai19} draw a
connection between (strategic) facility
location games and RSs with strategic providers.
Our models relate to non-strategic facility location, with
an emphasis on scalable optimization methods.

\emph{Individual fairness} provides an important alternative perspective, requiring
that two individuals similar w.r.t.\ a task should be classified 
(or otherwise treated) similarly by the ML system \cite{Dwork}. 
Our utilitarian approach guarantees that providers with sufficient impact on user welfare remain viable, regardless of whether their audience lies at the tail or head of content space. As shown in Sec.~\ref{sec:experiments}, this provides a dramatic improvement over myopic policies which tend to serve ``head'' users and providers disproportionately well. While maximizing welfare does not guarantee high individual utility, we generally expect high individual utility to emerge across the user population if utility functions exhibit diminishing returns. If the form of user utility precludes this, the objective can be augmented with a maximum individual regret term as discussed in Sec~\ref{sec:regretopt}.

Also relevant is recent research extending
algorithmic fairness to dynamical systems.
Several methods for improved fairness in sequential decision-making have been proposed, including work on bandits
\citep{joseph2016fairness}, RL \citep{jabbari2017fairness},
and importance sampling \citep{doroudi2017importance}.
Fairness in dynamical systems has been explored in specific domains: 
predictive policing \citep{lum2016predict, ensign2018runaway}, hiring \citep{hu2018short,hu2019disparate}, 
lending \citep{mouzannar2019fair},
and RSs \cite{ChaneyStewartEngelhardt,bountouridis2019siren}.
In general, this work has focused on the role of algorithms
 in shaping environments over time
\citep{hashimoto2018fairness,kannan2019downstream},
observing that the repeated application of
algorithms in a changing environment impacts fairness in the long-term differently
from short-term effects.

\section{Conclusion}\label{sec:conclude}
We have developed a stylized model of the ecosystem
dynamics of a content recommender system. We have used it to study the effects of
typical myopic RS policies on content providers whose viability
depends on attaining a certain level of user engagement. We showed
that myopic policies can serve users poorly by driving
the system to an equilibrium in which many providers fail to remain viable,
inducing poor long-term (user) social welfare. By formulating the recommendation problem holistically as
an optimal constrained matching, these deficiencies can be overcome: we optimize long-term social welfare, while at the same time increasing \emph{provider} viability despite the fact that our objective is to increase \emph{user} welfare. We developed several
algorithmic approaches to the matching problem and experiments
with our LP-based approach showed significant improvements in
user welfare over myopic policies.
While our model is stylized, we believe it offers insights into
more general, realistic RS model as outlined in Sec.~\ref{sec:robust}. It provides a rich framework for studying
tradeoffs between individual utility and (utilitarian) social
welfare. Extensions to account for group fairness, strategic
behavior and exploration policies are critical areas of future
research as are new algorithmic techniques (e.g., reinforcement learning, online matching) as discussed in Sec.\ref{sec:robust}.

\section*{Acknowledgements}

Thanks to Francois Belletti, Yi-fan Chen and Aranyak Mehta for valuable discussions on this topic and to the reviewers for their helpful suggestions.

\bibliographystyle{icml2020}
\bibliography{long,paper}

\newpage
\onecolumn
\appendix
\section{Algorithms and Proofs}
\subsection{Greedy Optimization and Theorem \ref{thm:additive}}
\label{sec:proof_additive}
\subsubsection{Preliminaries}
Before we begin the proof, we make a few notational modifications to significantly simplify it.  Let ${\cal C}$ be a set of content providers (hereinafter providers), ${\cal U}$ a set of users, and $A\in{\mathbb R}^{|{\cal U}|\times |{\cal C }|}$ a utility matrix. Furthermore, let $D: {\cal U} \rightarrow {\mathbb N}$ be a user demand function (specifying how many queries a user $u$ submits to the system) and $\nu_c$ for $c\in \cal C$ be the provider survival threshold, indicating how many queries the provider needs to receive in order to be viable. For this section, we will make the following simplifying assumptions:
\begin{itemize}
    \item every user has exactly one unique query during the epoch, and
    \item every user's view contributes exactly one unit towards a provider's viability.
\end{itemize}
Under these assumptions, the set of queries becomes identical to the set of users ($\calU = \calQ$), and $\bar{Q}(q_u) = 1$. We  proceed to prove the submodularity of user welfare as a function of the provider set subject to the above restrictions. After that, we discuss the reduction of Problem~\ref{eq:ilp_linear_util} to this restricted case. 

The welfare maximization problem is then to find a matching such that
\begin{align}
\label{eq:general_problem}
X^\ast =  \arg\max_{X,Y} & \sum_{u \in {\cal U}} \left(\sum_{t=1}^{D(u)}\sum_{c\in{\cal C}} A_{uc}X_{uct}  \right)\nonumber\\ 
\text{subject to }& \sum_{c \in {\cal C}} X_{uct} = 1\quad \forall u\in{\cal U}, t\in \{1,\ldots, D(u)\}\nonumber\\
& X_{uct} \leq Y_c\quad \forall u\in{\cal U}, c\in{\cal C}\nonumber\\
& \sum_{u\in{\cal U}}\sum_{t=1}^{D(u)} X_{uct} \geq \nu_cY_c, \quad\forall c\in {\cal C}\nonumber\\
& X_{uct}, Y_c \in \{0,1\}, \quad \forall u\in{\cal U}, c\in{\cal C},t\in [1,\ldots, D(u)] ,
\end{align}
The problem~\eqref{eq:general_problem} is a hard combinatorial problem, so the question is if we can derive good heuristics for solving it. Of particular interest is the following greedy heuristic: let $C\subseteq {\cal C}$ and define $g: 2^{\cal C} \rightarrow \mathbb{R}$ as 
\begin{align}
\label{eq:local_problem}
    g(C) \mapsto \max_X &  \sum_{u \in {\cal U}} \left(\sum_{t=1}^{D(u)}\sum_{c\in C} A_{uc}X_{uct}  \right)\nonumber\\
    \text{subject to }& \sum_{c\in C} X_{uct} = 1\quad \forall u\in{\cal U}, t\in [1,\ldots, D(u)]\nonumber\\
   & \sum_{u\in{\cal U}}\sum_{t=1}^{D(u)} X_{uct} \geq \nu_c \quad\forall c\in C \nonumber\\
   & X_{uct} \in \{0,1\}, \quad \forall u\in{\cal U}, c\in{\cal C},t\in [1,\ldots, D(u)].
\end{align}
That is, $g(C)$ is the best matching if the provider set $C\subset \mathcal C$ is fixed externally. Despite that (\ref{eq:local_problem}) has binary constraints on $X_{uct}$, its constraint matrix is Totally Unimodular; hence, we are guaranteed that (\ref{eq:local_problem}) is integral. The goal is then to start with $C = \emptyset$ and greedily add providers while $g(C)$ keeps improving.
In order for this to work well, $g$ would need to be sub-modular, which is precisely what we prove next.
\begin{theorem}\label{thm:additive}
For every two providers $c_0,c_1\in \mathcal C$ and $C\subseteq \mathcal C\setminus\{c_0,c_1\}$, it holds that
\begin{equation}\label{eq:submodinthm}
g(C\cup\{c_0, c_1\}) - g(C\cup\{c_1\}) \leq g(C\cup\{c_0\}) - g(C).
\end{equation}
\end{theorem}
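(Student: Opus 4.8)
The plan is to prove the equivalent ``decreasing marginal return'' form
\[
  g(C) + g(C\cup\{c_0,c_1\}) \;\le\; g(C\cup\{c_0\}) + g(C\cup\{c_1\})
\]
by an exchange argument (degenerate provider sets, where no feasible matching exists and $g=-\infty$, are handled trivially). Write $P:=C\cup\{c_0,c_1\}$, and, after replacing each user $u$ by $D(u)$ identical copies, assume $D\equiv 1$, so a fractional matching is an array $X\ge 0$ with $\sum_c X_{uc}=1$ for every user $u$, support only on the allowed providers, and column sums $\sum_u X_{uc}\ge\nu_c$ on the chosen providers. Since the constraint matrix of \eqref{eq:local_problem} is totally unimodular, $g$ equals the LP optimum, so I may take a fractional optimum $X^C$ attaining $g(C)$ and $X^P$ attaining $g(P)$ (extended by zeros). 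In the combined array $W:=X^C+X^P$ every user carries total mass $2$. I will produce a split $W=Y^0+Y^1$ with $Y^0$ a feasible matching for $g(C\cup\{c_0\})$ and $Y^1$ a feasible matching for $g(C\cup\{c_1\})$. Linearity of the objective then gives $\sum_{u,c}A_{uc}Y^0_{uc}+\sum_{u,c}A_{uc}Y^1_{uc}=\sum_{u,c}A_{uc}W_{uc}=g(C)+g(P)$, while the two left-hand terms are at most $g(C\cup\{c_0\})$ and $g(C\cup\{c_1\})$ respectively, which is the desired inequality.

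Building the split. The mass $W$ places on $c_0$ is exactly $X^P_{\cdot c_0}$ (only $X^P$ touches $c_0$), and since $Y^1$ may not use $c_0$, all of it goes to $Y^0$: set $Y^0_{uc_0}=X^P_{uc_0}$, $Y^1_{uc_0}=0$, and symmetrically $Y^1_{uc_1}=X^P_{uc_1}$, $Y^0_{uc_1}=0$. It remains to split, for each user $u$, the $C$-mass $W_{uc}=X^C_{uc}+X^P_{uc}$ ($c\in C$) between $Y^0$ and $Y^1$ so that (i) $\sum_{c\in C}Y^0_{uc}=1-X^P_{uc_0}$ and $\sum_{c\in C}Y^1_{uc}=1-X^P_{uc_1}$ (so all rows of $Y^0,Y^1$ sum to $1$), and (ii) $\sum_u Y^0_{uc}\ge\nu_c$ and $\sum_u Y^1_{uc}\ge\nu_c$ for every $c\in C$. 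This is a feasibility question for a bipartite transportation polytope: users on one side with prescribed row totals, providers $c\in C$ on the other with column totals confined to $[\nu_c,\,m_c-\nu_c]$ where $m_c:=\sum_u W_{uc}$, and capacity $W_{uc}$ on edge $(u,c)$. The target intervals are nonempty because $m_c=\sum_u X^C_{uc}+\sum_u X^P_{uc}\ge 2\nu_c$ (each term is $\ge\nu_c$ since $c\in C\subseteq P$), and the viability of $c_0,c_1$ holds automatically from $\sum_u X^P_{uc_0}\ge\nu_{c_0}$ and $\sum_u X^P_{uc_1}\ge\nu_{c_1}$.

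The heart of the proof — and the step I expect to be the main obstacle — is showing this transportation polytope is nonempty. I would check the Gale--Hoffman cut condition: the only possible obstruction is a set $S\subseteq C$ of providers whose combined lower requirement $\sum_{c\in S}\nu_c$ exceeds the mass that can be routed into $S$ subject to the capacities $W_{uc}$ and the fixed row totals (plus the mirror obstruction for $Y^1$). After the accounting I expect this to reduce exactly to the per-provider inequalities $\sum_u X^C_{uc}\ge\nu_c$, $\sum_u X^P_{uc}\ge\nu_c$ together with the row-sum identities, so that no genuinely new estimate is required, only careful bookkeeping. (An alternative via LP duality — writing $g(C)=\min_{b\ge 0}\big[\sum_u\max_{c\in C}(A_{uc}+b_c)-\sum_{c\in C}\nu_c b_c\big]$ and combining the optimal duals of $C\cup\{c_0\}$ and $C\cup\{c_1\}$ coordinatewise by $\min$ and $\max$ — makes the $\nu_c$-terms cancel cleanly, but controlling the residual $\sum_u\max(\cdot)$ terms seems to need a global rather than pointwise argument, so I would fall back on the primal exchange above.)
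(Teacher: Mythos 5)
Your proposal is correct in outline but takes a genuinely different route from the paper. The paper argues combinatorially at the level of \emph{integral} optimal matchings: it encodes the differences between optima for $C$, $C\cup\{c_1\}$ and $C\cup\{c_0,c_1\}$ as relocation-triplet multigraphs, proves these are acyclic with unique sinks $c_1$ and $c_0$ respectively (Proposition~\ref{prop:no_cycle_and_sink}), and then splices ``blue'' and ``red'' paths at junction nodes (Algorithm~\ref{alg:green_flow}, Lemma~\ref{lemma:green_flow}) to construct an explicit feasible matching $X^0$ for $C\cup\{c_0\}$ whose value witnesses \eqref{eq:submodinthm}. You instead work at the LP level (legitimate, by the total unimodularity of \eqref{eq:local_problem}), add an optimum $X^C$ for $g(C)$ to an optimum $X^P$ for $g(C\cup\{c_0,c_1\})$, and split $W=X^C+X^P$ into feasible fractional solutions for $C\cup\{c_0\}$ and $C\cup\{c_1\}$, after which linearity gives the decreasing-differences form. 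This bypasses the DAG/junction machinery entirely and is arguably cleaner and shorter; what the paper's construction additionally provides is explicit structural information about how optima for nested provider sets relate, whereas your argument is a pure existence-of-a-split argument.

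The one place you stop short is precisely the step you flag: nonemptiness of the transportation polytope is asserted (``I expect this to reduce\dots'') rather than proved, and since it is the heart of the proof it must be carried out. It does go through, with exactly the bookkeeping you anticipate: modelling the split as a circulation with exact row totals $a_u=1-X^P_{uc_0}$, edge capacities $W_{uc}$, and column totals in $[\nu_c,m_c-\nu_c]$, Hoffman's criterion reduces to two cut families, namely for all $T\subseteq\mathcal U$, $S\subseteq C$: (i) $\sum_{u\in T}a_u\le\sum_{u\in T}\sum_{c\in C\setminus S}W_{uc}+\sum_{c\in S}(m_c-\nu_c)$ and (ii) $\sum_{c\in S}\nu_c\le\sum_{u\in T}a_u+\sum_{u\notin T}\sum_{c\in S}W_{uc}$. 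Writing $X^P_u(S)=\sum_{c\in S}X^P_{uc}$ and using $W_u(C)=a_u+(1-X^P_{uc_1})$, condition (i) rearranges to $\sum_{c\in S}\nu_c\le\sum_{u\in T}(1-X^P_{uc_1})+\sum_{u\notin T}\sum_{c\in S}W_{uc}$, and both (i) and (ii) then follow from $\sum_{c\in S}W_{uc}\ge X^P_u(S)$, $a_u\ge X^P_u(S)$, $1-X^P_{uc_1}\ge X^P_u(S)$ and the viability of each $c\in S$ under $X^P$, i.e.\ $\sum_u X^P_{uc}\ge\nu_c$. Two smaller points for a final write-up: the theorem as stated is the unit-demand case of \eqref{eq:local_problem} (your reduction $D\equiv1$ matches the paper, which treats the weighted/stochastic variant by a separate reduction), and the ``$g=-\infty$'' degenerate cases deserve one explicit line (harmless, since feasibility passes to nonempty subsets of a feasible provider set, so any degeneracy on the left of \eqref{eq:submodinthm} makes the inequality trivial).
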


\subsubsection{Proof of Theorem \ref{thm:additive} in the Unit Case}
Let us make a simplification: a user with $D(u)$ queries is equivalent to $D(u)$ independent users; thus, we will just work with an extended user set. We now present the terminology used in this proof. A matching $X:\mU \rightarrow \mC,$ is a function from users to providers. We denote by $\mC(X)$ the serving providers under $X$, i.e., $\mC(X) =\{ c \mid \exists u\in \mU, X(u)=c\}$. Further, we say that a matching $X$ is \textit{feasible} if every provider in $\mC(X)$ meets her threshold under $X$, namely, if for every $c\in \mC(X)$ it holds that $\abs{\{u\in \mU \mid X(u)=c  \}}\geq \nu_c$. We denote by $F(X)$ the value obtained for a feasible matching $X$ in Problem \eqref{eq:local_problem} (note that $X$ may not be optimal w.r.t.. $\mC (X)$). In the rest of the proof, we rely on optimal matchings for $C, C\cup\{c_1\}$ and $C\cup\{c_0,c_1\}$ to construct a new matching, $X^0$. The active providers under $X^0$ are $C\cup \{c_0\}$ and, as we shall show, $X^0$ satisfies
\begin{equation}
\label{eq:resulting_x_prime}
g(C\cup\{c_0, c_1\}) - g(C\cup\{c_1\}) \leq F(X^0) - g(C).    
\end{equation}
The latter immediately implies Inequality \eqref{eq:submodinthm}, since by definition of $g$,
\[F(C\cup\{c_0, c_1\}) \leq \max_{X:\mC(X)=C\cup\{c_0\}} F(X)=g(C\cup \{c_0\}).\]

We are now ready to develop the tools required for the proof. %, which mostly rely on Graph Theory. 
The next notion assists to succinctly quantify the difference in user utility between two matchings.
\begin{definition}
Let $X$ be and $Y$ be two feasible matchings. We call a triplet $(c,c',u)$ a relocation triplet w.r.t. $X,Y$ if $X(u)=c$, $Y(u) = c'$ and $c\neq c'$.
\end{definition}
Importantly, two matchings define a unique set of (ordered) relocation triplets. conversely, a source matching and relocation triplets uniquely define the target matching.

%Clearly, for every pair of matchings $X$ and $Y$ with the corresponding set of relocation triplets $E$ it holds that
%\[ F(X)-F(Y) = \sum_{(c,c',u)\in E} A_{uc}-A_{uc'}. \]
Let $X$ and $X^1$ denote (any) optimal matching induced by $g(C)$ and $g({C\cup \{c_1\}})$ in Problem (\ref{eq:local_problem}), respectively. We now construct a graph whose nodes are the providers and its edges correspond to relocation triplets w.r.t. $X,X^1$. Formally, let $G^1=(\mC,E^1,w)$ denote a directed multi-graph, where the set of nodes is $\mC$; $E^1$ is the set of all relocation triplets  w.r.t. $X,X^1$, where every triplet $(c,c',u)$ forms a directed edge from $c=X(u)$ to $c'=X^1(u)$ with an ID of $u$; and the weight function $w$ is defined by $w(c,c',u)=A_{uc'}-A_{uc}$. Observe that the number of users each provider $c$ (a node in the graph) obtains under $X^1$ equals
\begin{equation}
\label{eq:num_of_users}
\abs{\{u\in \mU \mid X(u)=c\}}+\deg ^{+}(c)-\deg^{-}(c),    
\end{equation}
where $\deg^{+}(c)$ denotes the indegree of $c$ and its outdegree is denoted by $\deg^{-}(c)$. Moreover, the sum of weights is precisely the difference in utility between $X$ and $X^1$, i.e.,
\[
F(X^1)-F(X)=g(C\cup\{c_1\})-g(C) = \sum_{e\in E^1} w(e).
\]
In the next proposition, we use the fact that $X,X^1$ are optimal w.r.t. their provider sets to characterize properties of $G^1$.
\begin{proposition}\label{prop:no_cycle_and_sink}
It holds that:
\begin{enumerate}
    \item[(1)] $G^1$ does not contain directed cycles.
    \item[(2)] The only sink in $G^1$ is $c_1$.
\end{enumerate}
\end{proposition}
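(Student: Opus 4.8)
The plan is to prove both parts at once with a single exchange argument, so I would first pin down the choice of matchings: fix any optimizer $X$ of $g(C)$, and among all optimizers of $g(C\cup\{c_1\})$ choose $X^1$ to \emph{minimize the number of relocation triplets relative to $X$} (equivalently, to agree with $X$ on as many users as possible). Two free observations set the stage. Since $c_1\notin C=\mC(X)$, no served user has $X$-image $c_1$, so $G^1$ has no edge out of $c_1$, i.e.\ $\deg^-(c_1)=0$; and since $c_1\in\mC(X^1)$ it receives $\ge\nu_{c_1}\ge1$ users under $X^1$, all via relocation edges into it, so $\deg^+(c_1)\ge1$. Moreover every node incident to an edge of $G^1$ is either an $X$-image (hence in $C$) or an $X^1$-image (hence in $C\cup\{c_1\}$) of a served user, so the only node incident to $G^1$ that is not in $C$ is $c_1$. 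Thus $c_1$ is a genuine non-isolated sink, and the content of the proposition is that it is the only one and that $G^1$ is acyclic.

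The workhorse is a ``reroute'' operation applied to a structure $S$ of $G^1$ --- either a directed cycle or a directed path --- with edge list $e_i=(p_{i-1},p_i,v_i)$, meaning $X(v_i)=p_{i-1}$ and $X^1(v_i)=p_i$. \emph{Undoing} $S$ in $X^1$ (set $Y(v_i)=p_{i-1}$, keep $X^1$ elsewhere) changes welfare by $F(Y)-F(X^1)=-w(S)$, and \emph{pushing} $S$ forward in $X$ (set $Z(v_i)=p_i$, keep $X$ elsewhere) changes it by $F(Z)-F(X)=+w(S)$, where $w(S)=\sum_i w(e_i)$. The crux is feasibility of $Y$ and $Z$. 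When $S$ is a cycle, every provider's headcount is unchanged by either reroute, so $Y,Z$ are feasible with $\mC(Y)=\mC(X^1)$ and $\mC(Z)=\mC(X)$ for free. When $S$ is a path $p_0\to\cdots\to p_m$, undoing it in $X^1$ only hands one extra user to $p_0$ and removes one from $p_m$ (interior providers unchanged), and pushing it forward in $X$ does the reverse; feasibility then comes down to verifying that $p_0$ can afford to lose a user under $X$ and $p_m$ under $X^1$. I would get this from the headcount identity $\abs{(X^1)^{-1}(c)}=\abs{X^{-1}(c)}+\deg^+(c)-\deg^-(c)$: if $p_0$ is a source ($\deg^+(p_0)=0$, $\deg^-(p_0)\ge1$) then, since $p_0\in C\subseteq\mC(X^1)$, $\abs{X^{-1}(p_0)}\ge\abs{(X^1)^{-1}(p_0)}+1\ge\nu_{p_0}+1$; symmetrically, if $p_m$ is a sink ($\deg^-(p_m)=0$, $\deg^+(p_m)\ge1$) then $\abs{(X^1)^{-1}(p_m)}\ge\nu_{p_m}+1$ since $p_m\in C=\mC(X)$.

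Now part (1): were $G^1$ to contain a directed cycle $\mathcal O$, the pushed matching $Z$ is feasible with provider set $C$, so optimality of $X$ forces $w(\mathcal O)\le0$; hence the undone matching $Y$ satisfies $F(Y)=F(X^1)-w(\mathcal O)\ge F(X^1)$, so $Y$ is again optimal for $g(C\cup\{c_1\})$. But $Y$ agrees with $X$ on every user of $\mathcal O$ (these were relocation triplets of $X^1$) and with $X^1$ elsewhere, so it has strictly fewer relocation triplets than $X^1$, contradicting our choice. Hence $G^1$ is acyclic. Part (2): suppose some $c\ne c_1$ has $\deg^-(c)=0$ but $\deg^+(c)\ge1$. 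Using acyclicity, extend an incoming edge of $c$ backward until it starts at a source $p_0$, giving a path $P:p_0\to\cdots\to p_m=c$ whose endpoints meet exactly the source/sink conditions of the previous paragraph. Optimality of $X$ (via $Z$) gives $w(P)\le0$ and optimality of $X^1$ (via $Y$) gives $w(P)\ge0$, so $w(P)=0$ and $Y$ is again optimal with strictly fewer relocation triplets than $X^1$ --- a contradiction. Therefore every node incident to $G^1$ other than $c_1$ has out-degree at least one, i.e.\ $c_1$ is the unique sink.

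I expect the feasibility bookkeeping in the path case to be the one genuinely delicate point --- the entire role of the source/sink degree conditions is to guarantee that no reroute ever drives a provider below $\nu_c$ --- while the only conceptual subtlety is that ``no directed cycles,'' read literally, requires the minimal-relocation choice of $X^1$ (otherwise a zero-weight cycle among optimizers could survive), which is why that choice is built in from the outset.
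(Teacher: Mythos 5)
Your proof is correct and follows essentially the same route as the paper's: push the cycle (or the path into a purported non-$c_1$ sink) forward in $X$ and undo it in $X^1$, and play the resulting weight inequalities against the optimality of $X$ and $X^1$, using the headcount identity $\abs{(X^1)^{-1}(c)}=\abs{X^{-1}(c)}+\deg^+(c)-\deg^-(c)$ to check feasibility at the source and sink. The only difference is cosmetic: where the paper disposes of zero-weight cycles by modifying $X^1$ ``w.l.o.g.'' inside the proof, you fix a relocation-minimal optimal $X^1$ up front, which handles all sign cases uniformly and makes the tie case in part (2) explicit.
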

The proof of Proposition \ref{prop:no_cycle_and_sink} appears below. Proposition \ref{prop:no_cycle_and_sink} suggests that $G^1$ is a DAG with flow conservation, so we can decompose its edges into a set of independent paths (for any arbitrary partition into paths) between a source, i.e. a provider with an excess of users under the matching $X$, and the sink $c_1$.

Next, we introduce a second graph, $G^{0,1}$, with relocation triplets from $X^1$ to $X^{0,1}$, the optimal matching for $g(C\cup\{c_0,c_1\})$. Formally, $G^{0,1}=(\mC,E^{0,1},w)$ is a directed multi-graph, with the same set of nodes and the same weight function $w$. $E^{0,1}$ is composed of all relocation triplets from $X^1$ to $X^{0,1}$. By mirroring the proof of Proposition \ref{prop:no_cycle_and_sink}, we conclude that $G^{0,1}$ contains no cycles and that $c_0$ is the unique sink of every directed path in it. This graph is of special interest because its sum of weights is the left hand side of Inequality \eqref{eq:submodinthm}. Namely, $\sum_{e\in E^{0,1}}w(e)=F(X^{0,1})-F(X^1)=g(C\cup \{c_0,c_1\})-g(C\cup\{c_1\})$. It also describes how to optimally relocate users from $C\cup \{c_1\}$ to $C\cup \{c_0,c_1\}$. 

After understanding the structural properties of $G^1$ and $G^{0,1}$, we are ready to construct the promised matching $X^0$ (recall Inequality \eqref{eq:resulting_x_prime}). Let $G=(\mC, E^1 \cup E^{0,1},w)$ be the graph on the same set of nodes $\mC$, with all the edges from both $E^1$ and $E^{0,1}$ (notice that the same edge cannot appear in both).  For simplicity, we refer to paths in $E^1$ as \textit{blue} and to paths in $E^{0,1}$ as \textit{red}, for some arbitrary partition into paths. Our goal is to select a subset $E$ of edges from $E^1 \cup E^{0,1}$, which, when applied to $X$, will induce the matching $X^0$. To that end, we devise an iterative process to construct the set $E$, by adding one path at the time. The key property of this process, which we formalize via Algorithm \ref{alg:green_flow}, is that there exists a mapping from every red path to a new path, composed of red and (potentially) blue edges, with a less or equal weight than that red path.

To illustrate why this process is necessary, observe that not every subset of $E^1 \cup E^{0,1}$ can be applied to $X$ in order to obtain a new valid matching. In particular, recall that $E^{0,1}$ is the difference between $X^1$ and $X^{0,1}$; thus, a red path may involve the relocation $(c, c^\prime, u)$, where $u$ might have been matched to $c^\prime$ due some blue relocation $(c^{\prime\prime}, c, u)$. To ensure that the subset we pick will result in a valid matching, we make the following distinction: a subset $E$ such that  $E\subseteq E^1 \cup E^{0,1}$ is called \emph{consistent} if for any relocation triplet $(c,c',u)\in E$ either $X(u)=c$ or there exists another relocation triplet $(c'',c,u)\in E$. Informally, $E$ is consistent if every user $u$ that was relocated to $c'$ from $c$ was either matched to $c$ in $X$, or was relocated to $c$ from another provider. Consistency of the relocation triplets is a necessary, but not a sufficient condition for the resulting matching to be feasible.

Another useful notion is that of a \emph{junction node}. We say that a node $c\in \mathcal C$ is a junction w.r.t. $E^1, E^{0,1}$ if there exists a blue edge $(c'',c,u)\in E^1$ and a red edge $(c,c',u)\in E^{0,1}$ for some $c',c''\in \mathcal C$ and $u\in \mathcal U$. See Fig. \ref{fig:intersection} for illustration.

%%%%%%%%%%%%%%%%%%%%%%%%%%%%%%%%%%%%%%%%%%%%%%%%%%%%%%%%%%%%%%%%%%%%%%%%%%%%%%
% FIGURE 5
\begin{figure}[h!]
\centering
\includegraphics[scale=.17]{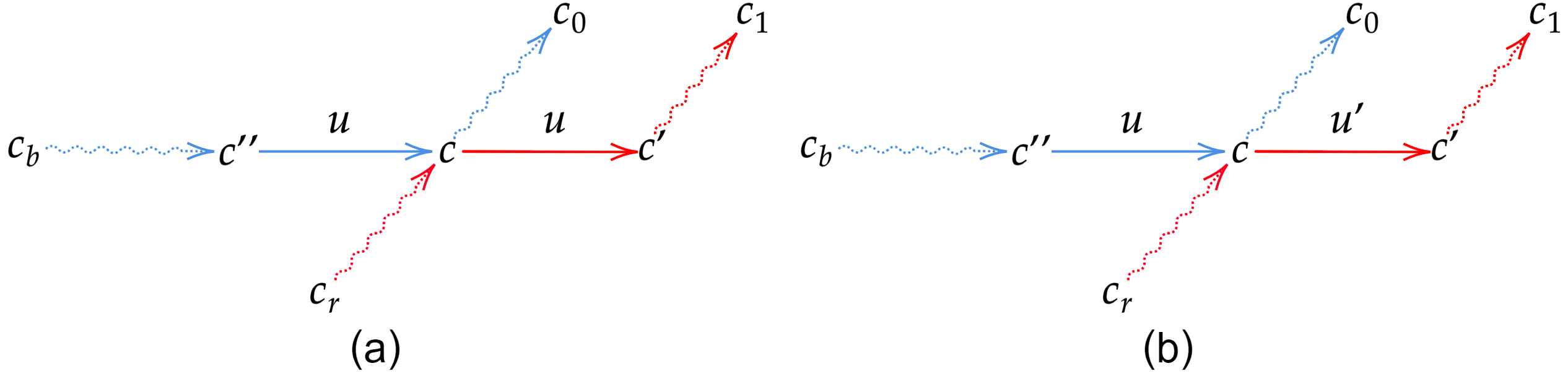}
\caption{Exemplifying the definition of a junction node. In (a), the node $c$ is a junction between the red path that starts at $c_r$ (and ends at $c_0$) and the blue path that starts at $c_b$ (and ends at $c_1$). The reason is that it receives the user $u$ from $c''$ along a blue edge, and passes $u$ along a red edge. In (b), however, $c$ is not a junction, since the user it passes onward along the red path is $u'$, which is not the user $c$ receives along the blue path.}
\label{fig:intersection}
\end{figure}
%%%%%%%%%%%%%%%%%%%%%%%%%%%%%%%%%%%%%%%%%%%%%%%%%%%%%%%%%%%%%%%%%%%%%%%%%%%%%%

Next, we employ Algorithm \ref{alg:green_flow} on the blue and red paths in $E^1 \cup E^{0,1}$. We show that
\begin{lemma}\label{lemma:green_flow}
The output $E$ of Algorithm \ref{alg:green_flow} satisfies the following:
\begin{enumerate}
    \item $E$ is consistent.
    \item When applied to $X$, the resulting matching $X^0$ is feasible. \label{lemma:item:feasible}
    \item $\sum_{e\in E}w(e) \geq \sum_{e\in E^{0,1}}w(e) $.
\end{enumerate}
\end{lemma}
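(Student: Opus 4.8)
The plan is to check the three bullets of Lemma~\ref{lemma:green_flow} in order, working on the combined multigraph $G=(\mathcal C,E^1\cup E^{0,1},w)$ and using the structure already in hand: $G^1$ and $G^{0,1}$ are DAGs obeying flow conservation relative to the pairs $(X,X^1)$ and $(X^1,X^{0,1})$, their unique sinks are $c_1$ and $c_0$ (Proposition~\ref{prop:no_cycle_and_sink}), and a blue edge and a red edge meet at a junction exactly when the blue edge hands the red edge's user into the red edge's tail. Write $n_c(Y)=\abs{\{u:Y(u)=c\}}$ for the load of provider $c$ under a matching $Y$. Consistency is the mild part: any blue edge $(a,b,v)\in E^1$ has $X(v)=a$, so blue edges are consistent on their own, and the only relocation that can threaten consistency is a red edge $(c,c',u)$ with $X(u)\neq c$; then $X^1(u)=c$, so a unique blue edge $(c'',c,u)\in E^1$ carries $u$ into $c$, and Algorithm~\ref{alg:green_flow} includes it, together with the whole prefix of its blue path up to it (needed below for feasibility, not for consistency). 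This generates no new obligation, and since distinct users lie on distinct blue edges and each user lies on at most one red edge, no blue edge is claimed twice, so $E$ is well defined and consistent.

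Feasibility is the crux, and it is what the design of Algorithm~\ref{alg:green_flow} is for. Using flow conservation restricted to $E$ one would show: (a)~$c_0$ receives exactly its total red in-flow, which equals $n_{c_0}(X^{0,1})\ge\nu_{c_0}$, since grafting blue prefixes changes only interior loads and never the in-flow of the sink $c_0$; (b)~$c_1$ ends empty, which is what forces the algorithm to adjust the red flow near $c_1$---the ``mapping of each red path to a new path of red and blue edges''---so that the blue flow grafted into $c_1$ is exactly cancelled by red flow leaving it and $n_{c_1}(X^0)=n_{c_1}(X)=0$; (c)~every $c\in C$ keeps at least $\nu_c$ users. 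For (c) the point is that $E$ adds blue material only as whole prefixes from genuine $X$-sources and red material only as whole red paths, modulo the adjustments near $c_1$, so the load of any $c\in C$ under $X^0$ is reached from $n_c(X)$ by a sequence of unit increments and decrements whose partial sums, by the junction bookkeeping, always lie between $0$, $n_c(X^1)-n_c(X)$ and $n_c(X^{0,1})-n_c(X)$; since $c$ is viable in each of $X$, $X^1$, $X^{0,1}$, every load along the way is $\ge\nu_c$. The genuinely delicate case---and the reason for the precise form of Algorithm~\ref{alg:green_flow}---is that a blue path's source may sit exactly at its threshold and a red path may have to detour around $c_1$, so the algorithm must re-use shared prefixes rather than duplicate them, ensuring each blue edge, hence each unit drained from a source, is consumed at most once.

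For the weight bound the subtlety is that keeping $c_1$ empty can force $E$ to drop the red edges flowing into $c_1$, so the red part of $E$ need not on its own recover all of $\sum_{e\in E^{0,1}}w(e)$ and the grafted blue edges must make up the difference. I would establish $\sum_{e\in E}w(e)\ge\sum_{e\in E^{0,1}}w(e)$ through the mapping of red paths to edge-disjoint new paths that cover $E$: each new path's weight is controlled relative to the red path it replaces by an exchange argument against the optimality of $X^1$ and $X^{0,1}$---undoing a grafted blue prefix together with the matching red step at its terminal junction keeps that matching feasible for its provider set, which bounds the prefix's weight. With the bound in hand, $F(X^0)=g(C)+\sum_{e\in E}w(e)$ together with the fact that $X^0$ has active set $C\cup\{c_0\}$ yields Inequality~\eqref{eq:resulting_x_prime}, hence Inequality~\eqref{eq:submodinthm}.

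The step I expect to be the main obstacle is feasibility, and within it parts~(b) and~(c): keeping $c_1$ out of the support of $X^0$ while preserving the viability of every provider in $C$ requires tracking all provider loads across three different optimal matchings at once, and this is exactly what dictates the form of Algorithm~\ref{alg:green_flow}; by contrast (a) and the weight bound are largely flow-conservation bookkeeping once the algorithm's output is available.
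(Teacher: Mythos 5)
Your outline follows the paper's intended route (consistency, then feasibility, then the weight bound via an exchange at junction nodes), but the two items you yourself flag as delicate are asserted rather than proved, and the assertions are exactly where the work lies. For consistency, your claim that whenever a red edge $(c,c',u)$ with $X(u)\neq c$ is placed in $E$, ``Algorithm~\ref{alg:green_flow} includes'' the blue edge $(c'',c,u)$ carrying $u$ into $c$, is the whole point at issue, and it is not automatic: a red edge can enter $E$ either inside a grafted suffix (Line~\ref{line:blue_and_red}) or wholesale at Line~\ref{algline:addition_all}, and the blue path $b$ containing $(c'',c,u)$ may already have been consumed at an earlier iteration through a \emph{different} junction. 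What rescues the construction is the tie-breaking rule of Line~\ref{line:identified}: the junction processed on $b$ is the one closest to the sink of $b$, so every blue edge of $b$ up to and including the edge entering any other junction on $b$---in particular $(c'',c,u)$---was added when $b$ was removed. Your sketch never invokes this choice, so the consistency argument as written has a hole; the paper's proof is a case analysis (red versus blue edge, first versus intermediate edge of its path, junction of the original $B,R$ versus not) organized precisely around that rule.

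The larger gap is feasibility for the providers in $C$. Your proposed argument---that the load of each $c\in C$ moves from its $X$-load by unit increments and decrements whose ``partial sums'' stay between $n_c(X^1)-n_c(X)$ and $n_c(X^{0,1})-n_c(X)$, hence every intermediate load is at least $\nu_c$---is not a proof: the relocation triplets are applied as a set, not in a canonical order; a provider can simultaneously be an interior node of several grafted paths and of pure red paths, so nothing confines the increments to that interval; and feasibility concerns only the final load, which the interval claim does not bound without further argument. The paper's proof sidesteps all of this with a direct degree count in $G^0=(\mathcal C,E,w)$ against $G^{0,1}$: since $E$ contains the terminal edge of every red path, $c_0$ has the same indegree as in $G^{0,1}$ and hence at least $\nu_{c_0}$ users; a provider that is the source of paths in $E$ is the source of only a subset of the paths it sources in $E^1\cup E^{0,1}$, so it loses no more users than under $X^{0,1}$ and retains at least its $X^{0,1}$ load; every other provider has the same indegree-minus-outdegree as in $G^{0,1}$, so its load equals its $X^{0,1}$ load; feasibility of $X^{0,1}$ then finishes the claim (and your worry about $c_1$ dissolves because grafted blue prefixes stop at the junction, so $c_1$ never retains users). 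Your third item is essentially Observation~\ref{obs:p}---the grafted blue prefix must outweigh the red prefix it replaces, else the optimality of $X^1$ for $g(C\cup\{c_1\})$ would be contradicted---so that part is on track, but without a completed consistency and feasibility argument the lemma is not established.
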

The proof of Lemma \ref{lemma:green_flow} appears below. 
As elaborated above, the matching $X$ and the relocation triplets in $E$ uniquely define the matching $X^0$. By the second part of Lemma \ref{lemma:green_flow}, $X^0$ is feasible. Moreover, by the third part of Lemma \ref{lemma:green_flow} and the definition of relocation triplets, we have
\[g(C\cup \{c_0,c_1\})-g(C\cup\{c_1\})= \sum_{e\in E^{0,1}}w(e) \leq \sum_{e\in E}w(e)= F(X^0)-g(X).
\]
This completes the proof of the theorem.
\begin{algorithm}
\caption{Flow Construction for $X^0$ \label{alg:green_flow} }
\begin{algorithmic}[1]
\STATE let $B$ be the set of blue paths and $R$ be the set of red paths.
\STATE let $E\gets \emptyset$ be the set of new paths.
\WHILE {$R \neq \emptyset$}{
\IF{there is a junction node w.r.t. $B,R$ \label{line:if}}
{
\STATE let $c$ be a junction node, and $b,r$ denote the paths whose edges $(c'',c,u)\in b$, $(c,c',u)\in r$ form the junction such that $c$ is the closest junction node to the sink of $b$. \label{line:identified}
\STATE add to $E$ the edge $(c'',c,u)$ and all the directed edges that precede it in $b$, and $(c,c',u)$ and all subsequent directed edges in $r$. \label{line:blue_and_red} \label{algline:addition}
\STATE remove $r$ from $R$, $b$ from $B$.
\STATE \textbf{continue}
}\ELSE{
\STATE add all the edges in $R$ to $E$, set $R\gets \emptyset$. \label{algline:addition_all}
} \ENDIF
}\ENDWHILE
\STATE \textbf{return} $E$
\end{algorithmic}
\end{algorithm}
\subsubsection{Proofs of Proposition \ref{prop:no_cycle_and_sink} and Lemma \ref{lemma:green_flow}}
\begin{proof}[Proof of Proposition \ref{prop:no_cycle_and_sink}]
For (1), assume by contradiction that a simple cycle $e_1,e_2,\dots e_k$ exists for some $k\in \mathbb N$. %, and let $e_i=(c^i,c^{i'},u^i)$ for every $i\in \{1,\dots,k\}$.
Since $c_1 \notin \mC(X)$, there is no relocation triplet with $c_1$ in the first entry, and hence $c_1$ does not participate in the cycle. We proceed by analyzing the weight of the cycle, $\sum_{i=1}^k w(e_i)$. \footnote{In general, a set of relocation triplets can contain cycles with positive/negative weights, if providers pass different users along the cycle. However, as we prove, this cannot happen in $G^1$ due to the optimality of $X^1$.}
\begin{itemize}
    \item If $\sum_{i=1}^k w(e^i) = 0$, we can remove the cycle from the graph and obtain a new graph $\tilde{ G^1}$ and a corresponding matching $\tilde {X^1}$. Observe that the number of users  every provider gets is the same as in $X$ (see Equation \eqref{eq:num_of_users}), and hence not only 
    $\mC(\tilde {X^1})=\mC(X^1)=C\cup \{c_1\}$ but also every provider in that set meets her threshold. Further, we did not change the sum of weights, and $F(X^1)-F(X)=F(\tilde {X^1})-F(X)$ implies $F(X^1)=F(\tilde {X^1})$; hence, $\tilde {X^1}$ is also optimal and we can assume w.l.o.g. that $X^1$ does not contain such cycles.
    \item If $\sum_{i=1}^k w(e^i) > 0$, we denote by $\tilde X$ a matching such that
    \[
    \tilde X(u)=
    \begin{cases}
    c' & \text{if the edge $(c,c',u)$ belongs to the cycle}\\
    X(u) &\text{otherwise}
    \end{cases}.
    \]
    Since the number of users each provider in $C=\mC(X)=\mC(\tilde X)$ gets under $\tilde X$ is the same as under $X$, $\tilde X$ is feasible. Moreover, $F(\tilde X)-F(X)>0$; hence, we obtain a contradiction to the optimality of $X$.
    \item If $\sum_{i=1}^k w(e^i) < 0$, we can use an argument similar to the previous case to claim sub-optimality of $X^1$.
\end{itemize}
For (2), assume by contradiction that a node $v\in \mC, v\neq c_1$ is a sink, and observe that we must have $v\in C$ since $\mC(X^1) = C \cup \{c_1\}$. Let $v_1,\dots, v_k,v$ denote the shortest path ending at $v$. Because $c_1\notin \mC(X)$, we know that $c_1$ cannot participate in this path. Further, $X$ is feasible and hence $v$ gets at least $\nu_v$ users under $X^1$. The analysis identically to the first part of the proposition, arguing that the contradiction assumption entails the existence of a path with positive/negative weights, in contrast to the optimality of $X$ and $X^1$.
\end{proof}
\begin{proof}[Proof of Lemma \ref{lemma:green_flow}]
Assume by contradiction that the output $E$ is not consistent. By definition of consistency, there exists an edge $e=(c,c',u)$ such that
\begin{enumerate}
    \item $X(u)\neq c$, and
    \item $(c'',c',u)\notin E$ for every $c\in \mathcal C$.
\end{enumerate}
Notice that $e\in E \subseteq  E^1 \cup E^{0,1}$; hence, $e$ is either blue or red. If $e$ is blue, let $b(e)$ denote the path $e$ is part of. Since $e\in E$ and is blue, the only way it could have been added to $E$ is via Line \ref{line:blue_and_red}. This means that either $e$ is the first edge in $b(e)$, in which case $X(u)=c$ since $E^1$ is consistent; or $e$ is an intermediate edge in $b(e)$, in which case there exists another edge $e'=(c'',c',u)\in b(e)$ that precedes it, again because $E^1$ is consistent. In both cases, we obtain contradiction.

Otherwise $e$ is red. Let $r$ denote the path that contains $e$. We have two cases:
\begin{itemize}
    \item If $c$ is a junction w.r.t. the initial $B,R$. In this case, there exists a blue path $b\in B$ that contains an edge $(c'',c,u)$, by the definition of a junction node. Moreover, at some point in the execution $e$ was added, so $b$ must have been identified as a path containing an edge that forms a junction node $v$ (not necessarily $c$) in Line \ref{line:if}. Recall that in Line \ref{line:identified} we assume that $v$ is the closest junction node to the sink of $b$, which is $c_1$; hence, all edges of $b$ that precedes the outgoing edge from $v$ are added to $E$ too, including $(c'',c,u)$. This implies a contradiction.
    \item Else, $c$ is not a junction. If $e$ is the first edge in the red path $r$, then $X^1(u)=c$, and since $c$ is not a junction, $X(u)=c$ as well. This holds because both $X,X^1$ are feasible. Otherwise, if $e$ is an intermediate edge in $r$, then there must exists a red edge $(c'',c,u)$ for some $c'' \in C$, because $E^1 \cup E^{0,1}$ is consistent. Since red edges like $e$ are inserted to $E$ in Lines \ref{line:blue_and_red} and \ref{algline:addition_all}, the preceding edges in their red path, including $(c'',c,u)$, are added as well. In both cases, we reach a contradiction.
\end{itemize}
\paragraph{Second part} Denote the matching obtained by applying the relocation triplets of $E$ to $X$ by $X^0$. To show that $X^0$ is feasible, we need to show that for every $c\in C\cup \{c_0 \}$, it holds that  $\abs{\{u:X^0(u)=c\}}\geq \nu_c$. To do so, we rely on the feasibility of $X^1$ and $X^{1,0}$, whose relocation edges were used to construct $X^0$. We divide the analysis into three parts:
\begin{itemize}
    \item If $c=c_0$. Since $X^{1,0}$ is feasible, we know that the $\deg ^{+}(c_0)$ in $G^{0,1}$ is at least $\nu_{c_0}$ (Recall the quantification of the number of matched users in Equation (\ref{eq:num_of_users})). Since $E$ contains the final edge of every red path, the indegree of $c_0$ in $G^0=(\mathcal C,E,w)$ is the same as in $G^{0,1}$.
    \item Else, if $c$ is the source of at least one path in $E$. In this case, it must have been the source of some paths in $E^1$ (blue) and $E^0,1$ (red). Recall that the sink of every blue path is $c_1$, and the sink of every red path is $c_0$. Moreover, if $c$ participates in other red/blue paths, it must be an intermediate node; thus, we can analyze its loss of users due to the paths in which $c$ is the source solely. Since $E\subseteq E^1 \cup E^{0,1}$, $c$ is the source of less paths in $G^0=(\mathcal C,E,w)$ than in $G^{0,1}$; therefore, its indegree in $G^0$ is  greater or equal to its indegree in $G^{0,1}$, which implies that $X^0$ matched $c$ with at least as many users as $X^{0,1}$.
    \item Finally, for any other $c$, $X^0$ matches $c$ with the same number of users as $X^{0,1}$, since its difference between the indegree and the outdegree in $G^0=(\mathcal C,E,w)$ remains as in $G^{0,1}$.
\end{itemize}

\paragraph{Third part}
The proof of this part is based on the following observation:
\begin{observation}\label{obs:p}
Let $b$ be a blue path with source $c_b$ and sink $c_1$, $r$ be a red path with source $c_r$ and sink $c_0$, and let $c$ be a junction w.r.t. $b$ and $r$, with edges $(c'',c,u)\in b$ and $(c,c',u)\in r$. Denote by $p$ the path that starts from $c_b$, takes the edge $(c'',c,u)$ and the edges that precedes in $b$, and then takes $(c,c',u)$ and its subsequent edges in $r$, ending at $c_0$. Then,  $\sum_{e\in p}w(e) \leq \sum_{e\in E^{0,1}}w(e) $. 
\end{observation}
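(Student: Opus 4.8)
The plan is to cut $p$ at the junction node $c$ into two halves. Let $P_b$ be the \emph{blue half} — the sub-path of $b$ from $b$'s source $c_b$ up to $c$, ending with the edge $(c'',c,u)$ — and let $P_r$ be the \emph{red half} — the sub-path of $r$ from $c$ down to $c_0$, beginning with $(c,c',u)$. Since $w$ is additive over edges, $\sum_{e\in p}w(e)=\sum_{e\in P_b}w(e)+\sum_{e\in P_r}w(e)$, so it suffices to prove
\[
\sum_{e\in P_b}w(e)\le 0 \qquad\text{and}\qquad \sum_{e\in P_r}w(e)\le \sum_{e\in E^{0,1}}w(e),
\]
and add the two inequalities.

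First, apply the relocation triplets of $P_b$ to the matching $X$; this is well defined because $P_b$ is a prefix of $b$, $E^1$ is consistent, and by Proposition~\ref{prop:no_cycle_and_sink}(1) $b$, hence $P_b$, is a simple path. In the resulting matching $X'$ every internal node of $P_b$ is net-neutral, $c$ only gains a user, and the source $c_b$ loses exactly one user; since $c_b$ is the source of a blue path it has at least one more user under $X$ than under $X^1$, so — $X^1$ being feasible — $c_b$ stays above $\nu_{c_b}$. Hence $X'$ is a feasible matching that still serves exactly $C$, and optimality of $X$ yields $g(C)+\sum_{e\in P_b}w(e)=F(X')\le g(C)$, i.e.\ $\sum_{e\in P_b}w(e)\le 0$.

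Second, the idea is to produce a matching $Z$ that serves exactly $C\cup\{c_0,c_1\}$ and satisfies $F(Z)\ge F(X^1)+\sum_{e\in P_r}w(e)=g(C\cup\{c_1\})+\sum_{e\in P_r}w(e)$; then $g(C\cup\{c_0,c_1\})\ge F(Z)$ rearranges to $\sum_{e\in P_r}w(e)\le g(C\cup\{c_0,c_1\})-g(C\cup\{c_1\})=\sum_{e\in E^{0,1}}w(e)$. The natural candidate for $Z$ is $X^1$ with the triplets of $P_r$ applied (equivalently, $X^{0,1}$ with the red prefix of $r$ undone): this moves one user along $r$ from the junction $c$ towards $c_0$, at a weight change of exactly $\sum_{e\in P_r}w(e)$.

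The main obstacle is that this candidate $Z$ need not be a valid matching: when the junction $c$ or the provider $c_0$ sits exactly at its viability threshold, the relocation leaves a provider holding a positive but sub-threshold load — and one cannot simply shrink the active provider set, since $g$ is not monotone in that set. Making the construction rigorous is the heart of the proof: one re-routes the stranded users of any such provider onto a provider that retains strict surplus (e.g.\ $c_r$, which keeps a surplus since it is a red source, or the penultimate node of $r$), and checks that the extra weight change is non-positive. This uses the acyclicity of $b$ and $r$ (Proposition~\ref{prop:no_cycle_and_sink}(1) and its mirror for $G^{0,1}$), the fact that $c_1$ and $c_0$ are the unique sinks of the blue and red graphs (Proposition~\ref{prop:no_cycle_and_sink}(2) and its mirror), and the consistency of $E^1$ and $E^{0,1}$. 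The degenerate case $c=c_1$, where $P_b$ is all of $b$, is handled separately but analogously.
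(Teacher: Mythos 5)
Your route diverges from the paper's, and the two places where you defer the work are exactly where it collapses. The paper's own argument for this observation is a single exchange step: the blue prefix of $b$ up to and including $(c'',c,u)$ and the red prefix of $r$ up to (but excluding) $(c,c',u)$ both deliver one extra user to the junction $c$, and optimality of $X^1$ for $g(C\cup\{c_1\})$ forces the blue prefix to weigh at least as much as the red prefix (otherwise swapping prefixes would give a heavier relocation of users towards $c_1$ than $E^1$ provides). Hence $p$, which is blue prefix plus red suffix, weighs at least as much as the red path $r$ it replaces---and that per-path comparison, ``modifying them to be heavier,'' is precisely what the third part of Lemma~\ref{lemma:green_flow} invokes. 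Your decomposition instead tries to prove $\sum_{e\in P_b}w(e)\le 0$ and $\sum_{e\in P_r}w(e)\le\sum_{e\in E^{0,1}}w(e)$, which is a genuinely different pair of bounds, and neither half holds up.

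Concretely: (i) the bound $\sum_{e\in P_b}w(e)\le 0$ fails exactly in the case you wave off as ``handled separately but analogously,'' namely $c=c_1$. There $P_b=b$, the matching $X'$ you construct sends a user to $c_1\notin C$, so it is not feasible for the provider set $C$ and optimality of $X$ gives you nothing; and blue paths can indeed have strictly positive weight (collectively they carry $g(C\cup\{c_1\})-g(C)$), so no analogous argument can rescue the inequality. (ii) The second half has a more basic problem than the threshold-boundary issue you describe: under $X^1$ the provider $c_0$ is unserved, so applying $P_r$ to $X^1$ hands $c_0$ exactly one user, whereas feasibility with respect to $C\cup\{c_0,c_1\}$ requires $\nu_{c_0}$ of them. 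This is the generic situation, and re-routing ``stranded'' users of $c$ onto $c_r$ does not manufacture the missing $\nu_{c_0}-1$ users, so the comparison $F(Z)\le g(C\cup\{c_0,c_1\})$ is never available. (Your parenthetical ``equivalently, $X^{0,1}$ with the red prefix of $r$ undone'' is also not equivalent: that matching carries all the other red paths as well.) Worse, the target inequality itself is shaky: $g$ is not monotone, so $E^{0,1}$ can contain red paths of negative weight that merely subsidize $c_0$, and bounding one red suffix by the \emph{total} $\sum_{e\in E^{0,1}}w(e)$ is then unjustified. The robust content of the observation---and the one the algorithm's analysis needs---is the per-path statement $\sum_{e\in p}w(e)\ge\sum_{e\in r}w(e)$ obtained from the prefix-exchange argument, not a bound against the aggregate weight of $E^{0,1}$.
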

To see why Observation \ref{obs:p} holds, recall that the prefix of $p$ from its source to $(c'',c,u)$ inclusive, all blue edges, must have higher weight than the prefix of $r$ from its source to $(c,c',u)$, exclusive. This is true since otherwise we could find a heavier blue path to replace $b$ in $E^0$. However, this cannot be true as $X^1$, which accounts for the blue edges in $G^1$, is an optimal matching for $g(C\cup \{ c_1\})$. Finally, Algorithm \ref{alg:green_flow} adds red paths either in their entirety (Line \ref{algline:addition_all}) or by modifying them to be heavier according to Observation \ref{obs:p}; hence, $\sum_{e\in E}w(e) \geq \sum_{e\in E^{0,1}}w(e)$.
\end{proof}
\subsubsection{From Deterministic to Stochastic Matching}
To complete the picture, it remains to argue that problems in which user queries contribute non-unit amounts to provider viability can be reduced to the unit case analyzed in the previous sections. We can think of the optimization problem in ~\eqref{eq:ilp_linear_util} as a matching problem with a weighted constraint 
\begin{align}
\label{eq:general_problem_weighted}
X^\ast =  \arg\max_{X,Y} & \sum_{u \in {\cal U}} \left(\sum_{t=1}^{D(u)}\sum_{c\in{\cal C}} A_{uc}X_{uct}  \right)\nonumber\\ 
\text{subject to }& \sum_{c \in {\cal C}} X_{uct} = 1\quad \forall u\in{\cal U}, t\in \{1,\ldots, D(u)\}\nonumber\\
& X_{uc} \leq Y_c\quad \forall u\in{\cal U}, c\in{\cal C}\nonumber\\
& \sum_{u\in{\cal U}}\sum_{t=1}^{D(u)}w_{uct} X_{uct} \geq \nu_cY_c, \quad\forall c\in {\cal C}\nonumber\\
& X_{uct} \in [0,1], Y_c \in \{0,1\}, \quad \forall u\in{\cal U}, c\in{\cal C},t\in [1,\ldots, D(u)] ,
\end{align}
where the weight $w_{uct}$ reflects the expected engagement of user $u$ towards provider $c$ at time $t$. Similarly to the deterministic setting above, we aim to show that 
\begin{align}
\label{eq:local_problem_weighted}
    g(C) \mapsto \max_X &  \sum_{u \in {\cal U}} \left(\sum_{t=1}^{D(u)}\sum_{c\in C} A_{uc}X_{uct}  \right)\nonumber\\
    \text{subject to }& \sum_{c\in C} X_{uct} = 1\quad \forall u\in{\cal U}, t\in [1,\ldots, D(u)]\nonumber\\
   & \sum_{u\in{\cal U}}\sum_{t=1}^{D(u)} w_{uct}X_{uct} \geq \nu_c \quad\forall c\in C \nonumber\\
   & X_{uct} \in [0,1], \quad \forall u\in{\cal U}, c\in{\cal C},t\in [1,\ldots, D(u)]
\end{align}
is submodular. 
The challenge in this case is that the problem of ~ \eqref{eq:local_problem_weighted} is no longer totally unimodular due to the fractional coefficients introduced by the viability constraint, hence, the combinatorial argument of the previous section is no longer applicable. 
It is possible, however, to construct an unweighted equivalent to weighted problem by introducing fictitious users and providers in a symmetric fashion. Applying the submodularity argument to the unweighted problem implies that the weighted one is submodular as well. See the extended version of this paper~\cite{fullversion} for a complete proof of this fact.

\subsection{Non-linear Optimization via Column Generation}\label{sec:column-gen}
\subsubsection{Formulation}
As discussed in Sec.~\ref{sec:columnGeneration}, it is desirable to have a procedure that can optimize social welfare under non-linear utility models. To this end, we extend the mixed-integer linear program in Problem~\eqref{eq:ilp_linear_util} to handle non-linear utilities. Let $C\in \calC^k$ be a $k$-tuple of providers. A pair $(q_u, C)\in \calQ \times \calC^k$ represents a possible answer to user $u$'s $k$ queries identical to $q_u$ by the provider tuple $C$. We call such a tuple a 
\emph{star} $q_uC$. For each star, we use a variable $\pi_{q_uC}$ to represent the policy's match to $q_u$.
\label{sec:non_linear_opt}
{
\begin{maxi}
{\pi, y}{ \sum_{u\in \calU}\sum_{q_u \in \cal Q}\sum_{C \in \calC^k}\pi_{q_u,C}\bar{\sigma}(q_{u}, C)}
{
\label{eq:ilp_nonlinear_util_supp}
}{}
\addConstraint{\sum_{C \in \calC^k}\pi_{q_u, C}}{\leq1\quad}{u\in\calU}
\addConstraint{\sum_{\{C\in \calC^k| c\in C\}}\pi_{q_u, C}}{\leq y_c\quad}{ u\in\calU, c\in\calC}
\addConstraint{\sum_{u\in\calU}\sum_{C\in \calC^k} \#[q_uC,c]\overline{Q}(q_u)\pi_{q_u, C}}{\geq \nu_c y_c,\quad}{c\in\calC,}
\end{maxi}
}
where $\#[uC,c]$ is the number of times provider $c$ appears in star $q_uC$, and $\bar{\sigma}(q_{u}, C)=\rho(u)P_u(q_u)\sigma(q_{u}, C)$. 
We rely on the linear relaxation of the
integrality constraints to approximate the
solution of~\eqref{eq:ilp_nonlinear_util_supp} efficiently. It is not obvious if and how this problem can be approximated via discrete algorithmic techniques, 
so we resort to relaxing the integrality constraints and solving the problem as a linear program. 
Even under the linear relaxation, the problem size still grows proportionally to $\calC^k$ due to the number of variables introduced by linearization. The redeeming property of this problem, however, is that the number of constraints grows proportionally to $\calU \times \calQ \times \calC$ and not $\calC^k$. Hence, it is feasible to approach the problem from a column generation perspective. 

\subsubsection{Column Generation}
A standard column generation approach for solving a large linear program is a two-step iterative algorithm in which the LP is initially constructed using a small subset of its variables to obtain a reduced-size (master) problem. The dual of the master problem yields a dual optimal solution, which is then used to find a (as of yet not generated) variable with maximal reduced cost. That variable is added to the master problem. The method iterates until no variable with positive reduced cost can be found, or some convergence tolerance is reached.

When the set of primal variables is large, the problem of finding a variable with maximal reduced cost (also called a column generation oracle) is still a hard combinatorial optimization problem (typically some flavor of knapsack). However, these problems tend to be massively decomposable and the running time does not scale exponentially in practice. 

We now proceed to derive a column generation oracle for Problem~\eqref{eq:ilp_nonlinear_util_supp}. Let ${\cal A} = (A, b, c)$ denote an LP in inequality form, denoting the optimization problem $x^\ast = \arg\max_{x: Ax\leq b, x\geq 0} c^Tx$. Let $y^*$ be an optimal dual solution to ${\cal A}$. The reduced cost problem is thus $\hat{c} = c - A^Ty^\ast$. The column generation oracle thus solves the problem $i^\ast = \arg\max_i \hat{c}$, which corresponds to the index of the primal variable with highest reduced cost.  We now discuss solving the column generation problem given the specific form of \eqref{eq:ilp_nonlinear_util_supp}.

We adopt the following convention for naming the dual variables corresponding to constraints in  \eqref{eq:ilp_nonlinear_util_supp}:
\begin{align*}
    \beta_{u} : & \sum_{C \in \calC^k}\pi_{q_u, C}\leq 1\quad & u\in\calU,\\
    \gamma_{uc}:  & \sum_{\{C\in \calC^k| c\in C\}}\pi_{q_u, C}\leq y_c\quad & u\in\calU, c\in\calC,\\
   \alpha_{c} :  & \sum_{u\in\calU}\sum_{C\in \calC^k} \#[q_uC,c]\overline{Q}(q_u)\pi_{q_u, C} \geq \nu_c y_c\quad & c\in\calC.\\
\end{align*}

The column generation problem (derived by computing the dual and maximizing the reduced cost) then becomes: 

\[uC^\ast = \arg\max_{u\in\calU, C\in\calC^k}\quad \bar{\sigma}(q_{u}, C) - \left(\beta_u + \sum_{c \in C} \gamma_{uc} - \sum_{c\in C}\#[q_uC,c]\overline{Q}(q_u)\alpha_{c}\right).\]

Let us now discuss how the above maximization can be solved. First, observe that the problem decomposes in the user variable $u$. That is for each $u \in \calU$, we can independently solve the maximization over $C$. This can be done in parallel for each user and the maximum over $u$ can be computed by enumeration. 
Supposing $u$ is fixed, we still have to solve a series of non-linear integer optimization problems due to the non-linear nature of $\bar{\sigma}$. We can covert the non-linear problems to linear in two steps. First, we convert the tuple maximization problem to a binary-variable one as by introducing slot indicator variables for each of the elements of the tuple $C$. That is:
\[\max_{x}\quad \bar{\sigma}\left(\sum_{t\in 1:k}\sum_{c}x_{ct} A_{uc}\right) - \left(\beta_u + \sum_{t\in 1:k}\sum_{c \in \calC} x_{ct}\gamma_{uc} - \sum_{c\in \calC}\left(\sum_t x_{ct}\right) \alpha_{c}\right) \text{ s.t. } \sum_c x_{ct}  = 1\quad\forall t\in 1:k\ .\]

Furthermore, the non-linear $\bar{\sigma}$ can be replaced by a series of local first-order approximations (in fact, a zero-order approximation is also possible), to yield binary integer program. That is:
\[\max_{x}\quad \bar{\sigma}^\prime(m_i) \cdot \left(\sum_{t\in 1:k}\sum_{c}x_{ct} A_{uc}\right) - \left(\beta_u + \sum_{t\in 1:k}\sum_{c \in \calC} x_{ct}\gamma_{uc} - \sum_{c\in \calC}\left(\sum_t x_{ct}\right) \alpha_{c}\right)\]
\[ \text{ s.t. } \sum_c x_{ct}  = 1\quad\forall t\in 1:k\ ,\quad l_i\leq \sum_{t\in 1:k}\sum_{c}x_{ct} A_{uc}\leq u_i,\]
where $\bar{\sigma}^\prime(m_i)$ is the derivative of $\bar{\sigma}$ at $m_i$. Under smoothness assumptions on $\bar{\sigma}$, this linearization provides a bounded approximation to the original problem. Again, these interval sub-problems can be solved in parallel.

\subsubsection{Illustrative Evaluation of Column Generation} 
\label{sec:column-gen-expers}

Here we describe some preliminary experiments using the column generation strategy described above.
We find that column generation is capable of keeping more providers viable than the myopic baseline at early steps.
However the performance was less reliable than the LP-RS approach that was evaluated in Section \ref{sec:experiments}.
In some settings, the column generation approach fails to maintain a consistent matching in successive iterations, resulting in a slowly declining number of viable providers over time.
We hypothesize that this is due to rounding errors in the procedure, or early stopping before convergence (our implementation used $300$ iterations of column generation rather than running exhaustively until convergence).
Therefore we expect that improvements can be made by fine-tuning this approach, but leave this to future work.

Figure \ref{fig:column-gen-results} shows the results of the experiments.
While we use the same embeddings data as in the main body of the paper, but we scale down the problem size to compensate for the slower runtime of the column generation approach; this explains the differing number of viable providers at equilibrium compared with the $LP-RS$ approach presented in Section \ref{sec:experiments}.
In the synthetic setting we used $50$ providers, about $260$ users and viability threshold of $\nu=5$.
In the other two datasets we used a competitive (from the provider perspective) setting of $100$ providers, $100$ users, and viability threshold of $\nu=8$.
We used slate size of $1$ for all datasets.

%%%%%%%%%%%%%%%%%%%%%%%%%%%%%%%%%%%%%%%%%%%%%%%%%%%%%%%%%%%%%%%%%%%%%%%%%%%%%%%%
% FIGURE 6
\begin{figure*}[t!]
  \centering
  \begin{subfigure}[t]{\subfigwidth}
    \noindent\resizebox{\textwidth}{!}{
      \includegraphics[width=\subfigwidth]{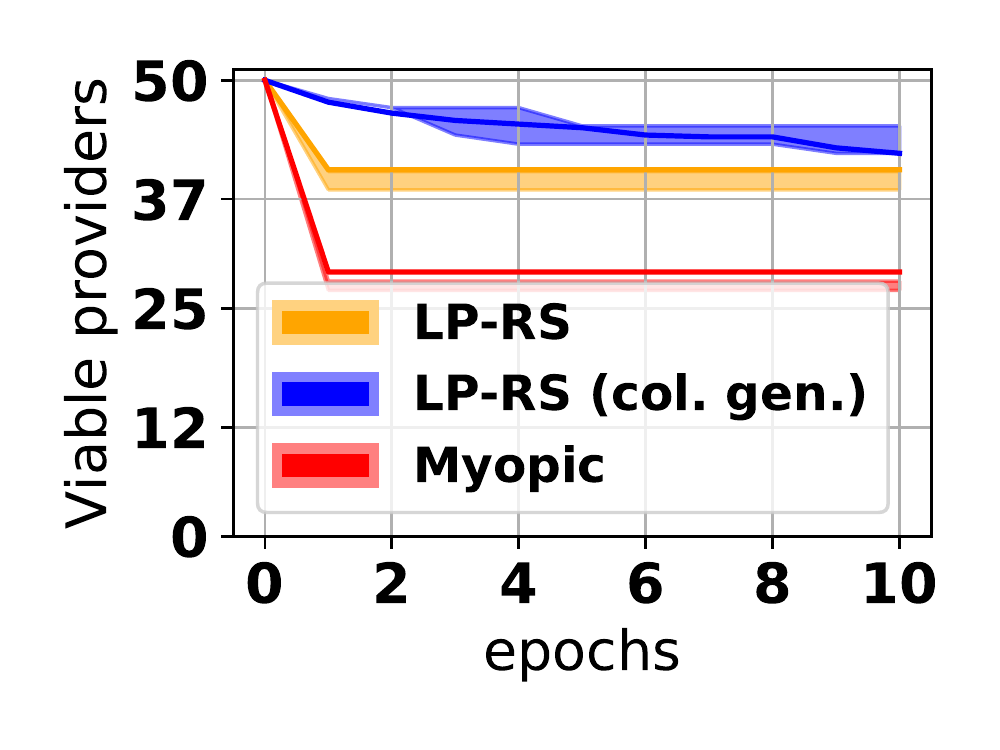}
    }
  \caption{
  Synthetic embeddings.
  }\label{fig:synthetic_viability_appendix}
  \end{subfigure}%
%   \hfill
  \begin{subfigure}[t]{\subfigwidth}
    \noindent\resizebox{\textwidth}{!}{
      \includegraphics[width=\subfigwidth]{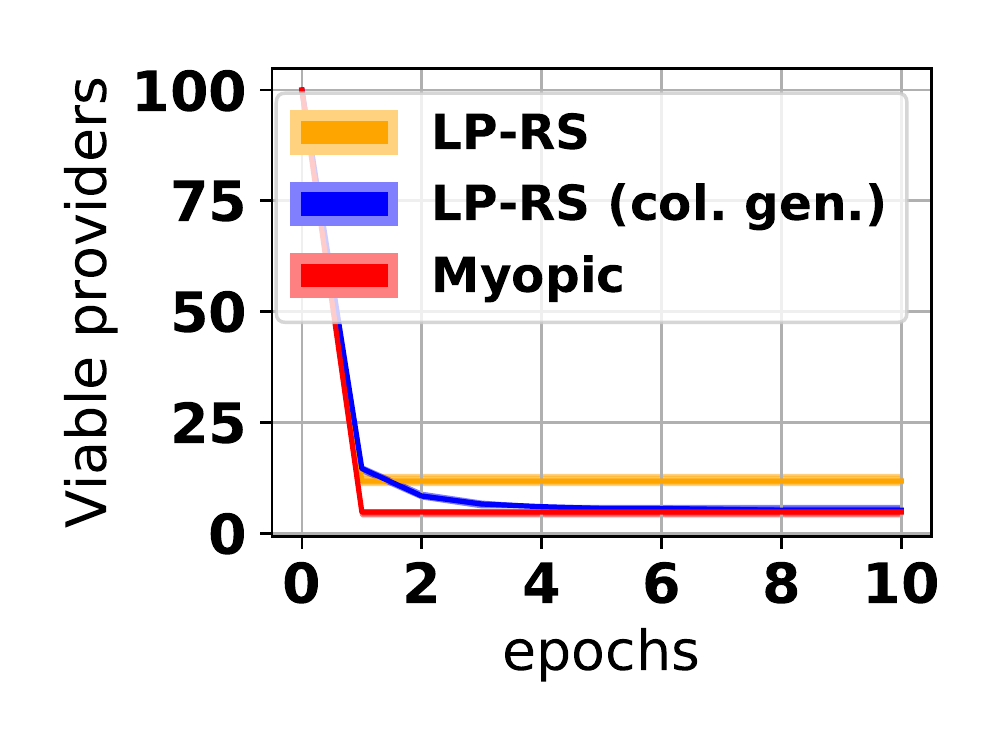}
    }
  \caption{
  Movielens embeddings.
  }\label{fig:movielens_viability_appendix}
  \end{subfigure}%
%   \hfill
  \begin{subfigure}[t]{\subfigwidth}
    \noindent\resizebox{\textwidth}{!}{
      \includegraphics[width=\subfigwidth]{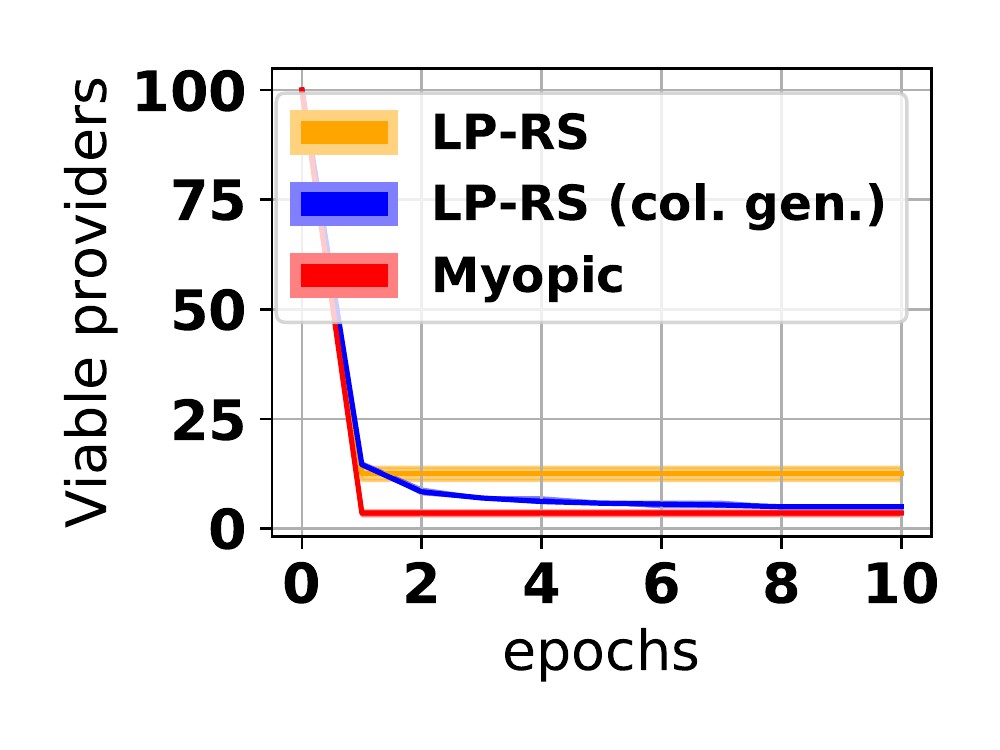}
    }
  \caption{
  SNAP embeddings.
  }\label{fig:snap_viability_appendix}
  \end{subfigure}%  
  \caption{
    Simulations that evaluate the column generation matching strategy (LP-RS col. gen.) on smaller problems.
    While LP-RS col. gen. is capable of finding a good matching at a given step of simulation, its solutions are relatively inconsistent compared with LP-RS in the limit of several time steps.
  }\label{fig:column-gen-results}
\end{figure*}
%%%%%%%%%%%%%%%%%%%%%%%%%%%%%%%%%%%%%%%%%%%%%%%%%%%%%%%%%%%%%%%%%%%%%%%%%%%%%%%%

\section{Training Details}
\label{sec:embeddings-details}

Here we provide details of training for the embeddings described in Section \ref{sec:datasets}.

\paragraph{Movielens}
We trained a non-negative matrix factorization embedding space using the Movielens dataset \citep{harper2015movielens}.
We use the distribution of this dataset containing about $100,000$ ratings of about $9,000$ movies by about $600$ users.
The dataset comprises a sparse ratings matrix Given the sparse ratings matrix $R \in \R_{\geq 0}^{N_\text{users} \times N_\text{movies}}$.
We use the binarized engagement matrix $E \in \{0, 1\}^{N_\text{users} \times N_\text{providers}}$ with $E_{i,j} = \mathbbm{1}(R_{i,j})$.
The embeddings are produced by finding low-rank non-negative factors of the engagement matrix $E \in \{0, 1\}^{N_\text{users} \times N_\text{providers}}$, by solving the optimization problem
\begin{equation}\label{eq:nmf}
  \min_{U,V} || (E - U V^T) ||_F^2
    + \lambda_U ||U||_F^2 + \lambda_V ||V||_F^2
\end{equation}
which yields factors $U \in \R_{\geq 0}^{N_\text{users} \times N_\text{topic}}$ and $U \in \R_{\geq 0}^{N_\text{providers} \times N_\text{topic}}$.

The factors $U \in \R_{\geq 0}^{N_\text{users} \times N_\text{topic}}$ and $C \in \R_{\geq 0}^{N_\text{providers} \times N_\text{topic}}$ yield row and column vectors that are treated as the embedding vectors; in this case, a single content provider is equivalent to a single movie from the dataset.

The rows of these factor matrices were used to sample user and provider vectors in the RS ecosystem.
Note that the value of the ratings were not used, so the ``affinity'' between user and movie in this embedding space is a measure of how likely the user is to \emph{watch} the movie, rather than rate it highly.
The randomly initialized factors $U, V$ are alternatively updated via Weighted alternating least squares \cite{hu2008collaborative} for 100 iterations.
We used embedding rank $N_\text{topic}=20$, and set $\lambda_U = 1$ and $\lambda_v = 1.$. %and $\epsilon=0.01$.

\paragraph{SNAP}
The dataset consists of a large list of (followee, follower) pairs, where each user is
given a unique node ID label. We turn this dataset into a set of providers and users as follows.
First, we randomly subsample 100k of the 41 million users. We designate followees as
providers. For every provider, we then remove their follow edges, so that they do not follow anyone else.
This makes the graph bipartite, where users follow providers.
We then choose the top 500 providers in terms of follower count, and remove any users that do not follow
at least one of them. This leaves a total of 500 providers, and 59,394 users.

for each user $i$, we learn a 24-dimensional vector $\mathbf{u}_i$, and for each provider $j$, a 24-dimensional
vector $\mathbf{v}_j$. We train these embeddings by cross-entropy to predict whether there is an edge $A_{ij}$
between user $i$ and provider $j$, where the probability is given by,
\begin{align}
    P(A_{ij} = 1) &= \sigma(\mathbf{u}_i^\top \mathbf{v}_j)
\end{align}
Where $\sigma(\cdot)$ is the sigmoid function. $A_{ij}$ is 1 if user $i$ follows provider $j$.
We add a small amount of weight decay to ensure that the embeddings are well behaved.

\section{Simulation Details}
\label{sec:simulation-details}

This section contains details to reproduce the simulations described in Section \ref{sec:results}.

\paragraph{Exploring Embedding Type}
We generate 50 provider vectors and a varying number of user vectors (between 4412 and 4672 per run).
Provider and user vectors are sampled in a 10-dimensional topic space, with provider vectors sampled normally with variance 50.
These provider vectors serve as cluster means for the mixture-of-Gaussians that generates user vectors. 
The prior over cluster assignments depends on the variant (\emph{uniform} vs. \emph{skewed} described in the text).
User variance was set to 0.1 in the \emph{uniform} variant, and user variance scaled inversely with popularity in the \emph{skewed} variant.
The slate size was $s=1$, with viability threshold set to $\nu=80$.
We run for 10 epochs using 5 seeds for each method/data type pair, and report average values plus or minus one standard deviation in Table \ref{tab:embedding-type}.

\paragraph{Tradeoffs in regret and welfare}
We generate synthetic embeddings of the \emph{skewed} variant, with 50 providers and around 900 users.
We use slate size $s=4$ with viability threshold $\nu=9$.
Other settings are carried over from the previous experiment.
We run steps of simulation until the policies converge then measure the welfare and max regret metrics.

\paragraph{Synthetic simulation}
We generate synthetic embeddings of the \emph{skewed}.
The parameters are similar to those described above, with 50 providers (distributed normally with $\sigma^2 = 5$) and about $10,000$ users.
We simulate the RS for ten epochs with slate size $s=1$ and viability threshold $\nu=78.5$.

\paragraph{Movielens simulation}
Starting with the learned low-rank factors, we subsample $250$ movie column (which serve as providers) and $1,000$ user columns.
We simulate the RS for ten epochs with slate size $s=1$ and viability threshold $\nu=10$.

\paragraph{SNAP simulation}
Starting with the learned embedding, we subsample $300$ providers and $566$ users.
We simulate the RS for ten epochs with slate size $s=1$ and viability threshold $\nu=10$.

\section{Stochastic Policy Ablation}\label{sec:stochastic_policy_ablation}

When considering which providers to recommend to a particular user query $q_u$, our proposed policy $\pi_{\text{LP-RS}}$ may choose to ``subsidize'' providers that are \emph{slightly suboptimal}, i.e. not the best affinity for the user but still having relatively good affinity.
By contrast the myopic policy ignores the ecosystem dynamics and always chooses the best-affinity provider for each user.
Would a policy that stochastically samples providers $c$ in proportion to their affinity to $q_u$ naturally lead to a similar subsidizing effect as $\pi_{\text{LP-RS}}$?
We find empirically that this is not the case.

%%%%%%%%%%%%%%%%%%%%%%%%%%%%%%%%%%%%%%%%%%%%%%%%%%%%%%%%%%%
% Figure 7
\begin{figure*}[h!]
  \centering
      \includegraphics[width=.6\textwidth]{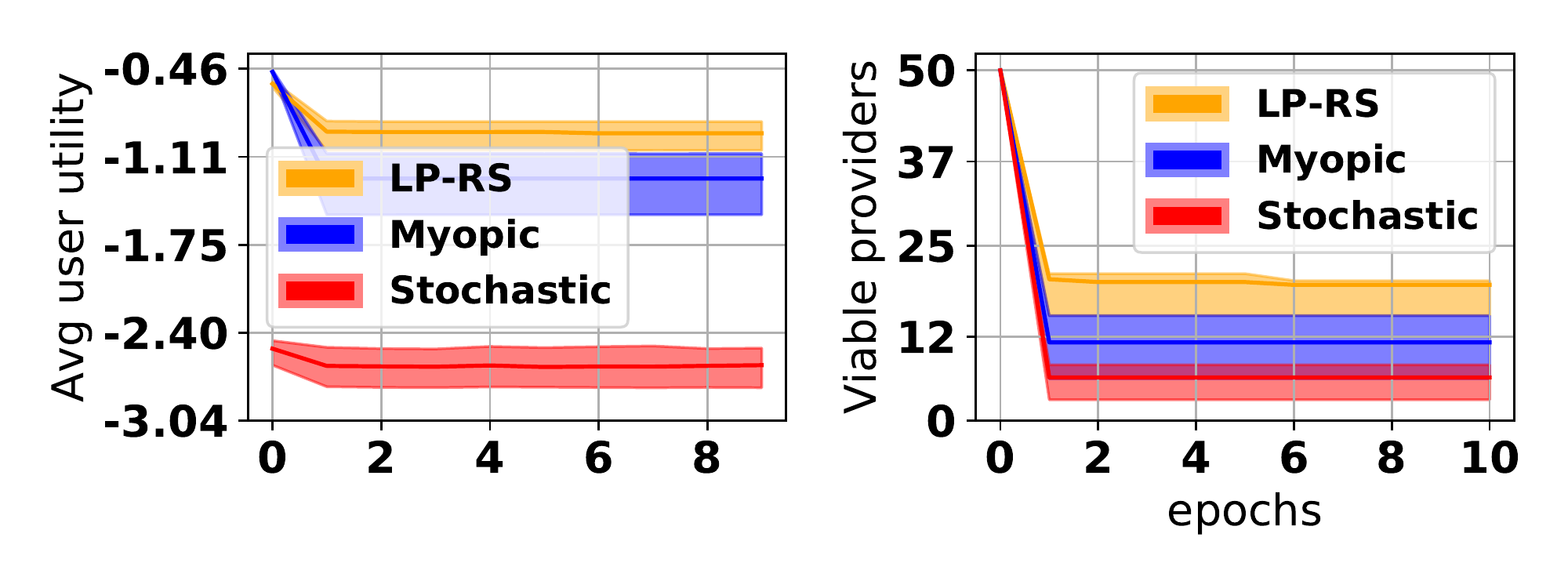}
  \caption{
  An affinity-aware stochastic policy under-performs relative to the myopic baseline and the proposed LP-RS policy.
  }\label{fig:stochastic_policy_ablation}
\end{figure*}
%%%%%%%%%%%%%%%%%%%%%%%%%%%%%%%%%%%%%%%%%%%%%%%%%%%%%%%%%

We specify a stochastic policy $\pi_{\text{Stochastic}}$ that samples the recommended provider for user query $q_u$ as $c^{\text{Sto}}_{q_u} \sim p(c;q_u)$ where $p(c;q_u) = \frac{r(q_u, c)}{\sum_{c'}r(q_u, c')}$ is a Boltmann distribution over creators specified by the affinity function $r(q_u, \cdot)$ for that user.
If multiple recommendations per user query are needed, then the appropriate number of samples are drawn without replacement.
We simulate the recommender ecosystem using synthetic embedding distributions with the same settings as in Section \ref{sec:experiments}.
Figure \ref{fig:stochastic_policy_ablation} shows that the stochastic baseline under-performs relative to the myopic policy, indicating that simply sampling suboptimal providers with some non-zero probability does not keep these providers viable in the long run.

\section{Additional Histograms}
\label{sec:more-histograms}

Figure \ref{fig:user_utility_histogram} shows user utility histograms for all simulations.

%%%%%%%%%%%%%%%%%%%%%%%%%%%%%%%%%%%%%%%%%%%%%%%%%%%%%%%%%%%%%%%%%%%%%%%%%%%
% FIGURE 8
\begin{figure*}[t!]
  \centering
  \begin{subfigure}[t]{\subfigwidth}
    \noindent\resizebox{\textwidth}{!}{
      \includegraphics[width=\subfigwidth]{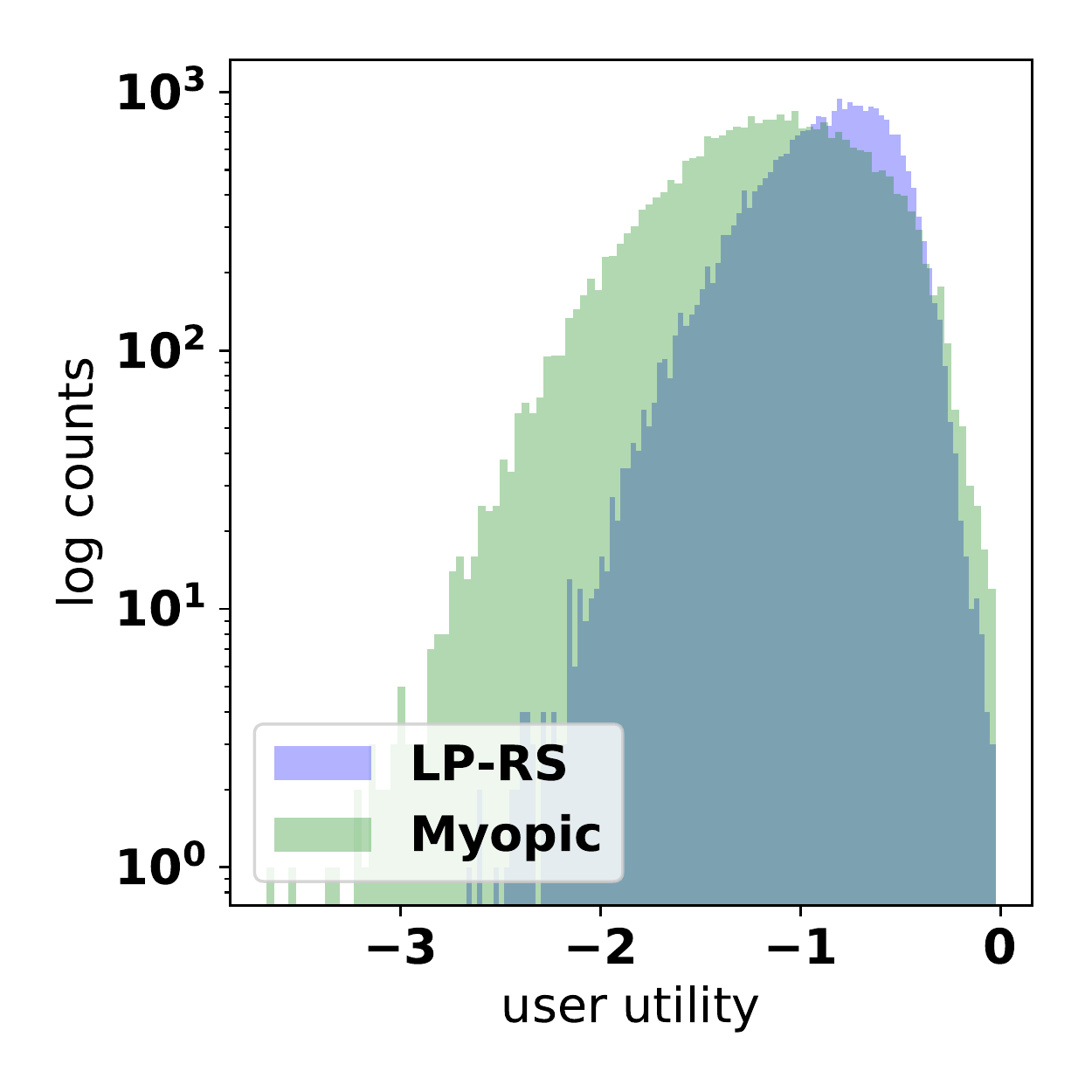}
    }
  \caption{
  Synthetic embeddings.
  }\label{fig:synthetic_user_utility_histogram}
  \end{subfigure}%
%   \hfill
  \begin{subfigure}[t]{\subfigwidth}
    \noindent\resizebox{\textwidth}{!}{
      \includegraphics[width=\subfigwidth]{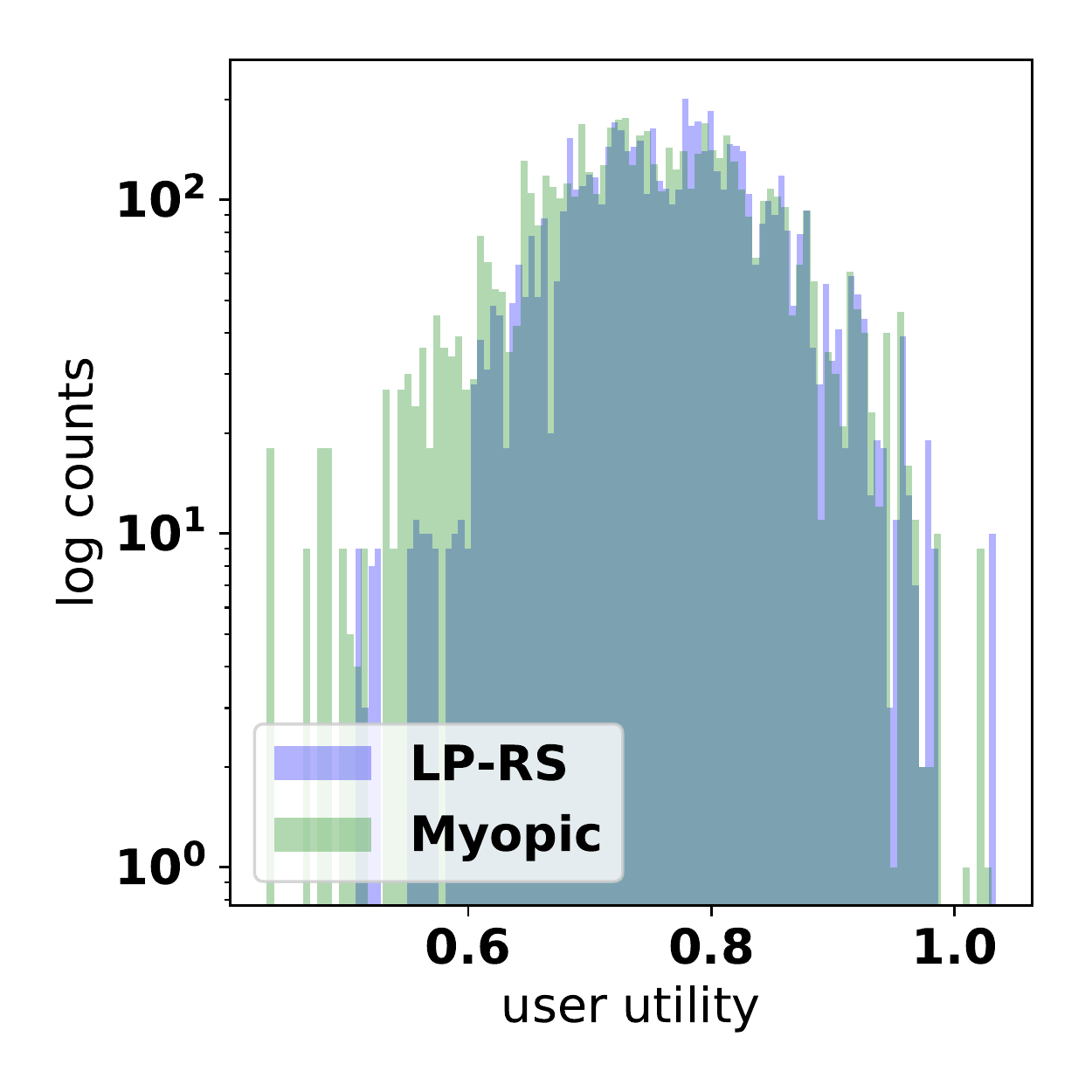}
    }
  \caption{
  Movielens embeddings.
  }\label{fig:movielens_user_utility_histogram}
  \end{subfigure}%
%   \hfill
  \begin{subfigure}[t]{\subfigwidth}
    \noindent\resizebox{\textwidth}{!}{
      \includegraphics[width=\subfigwidth]{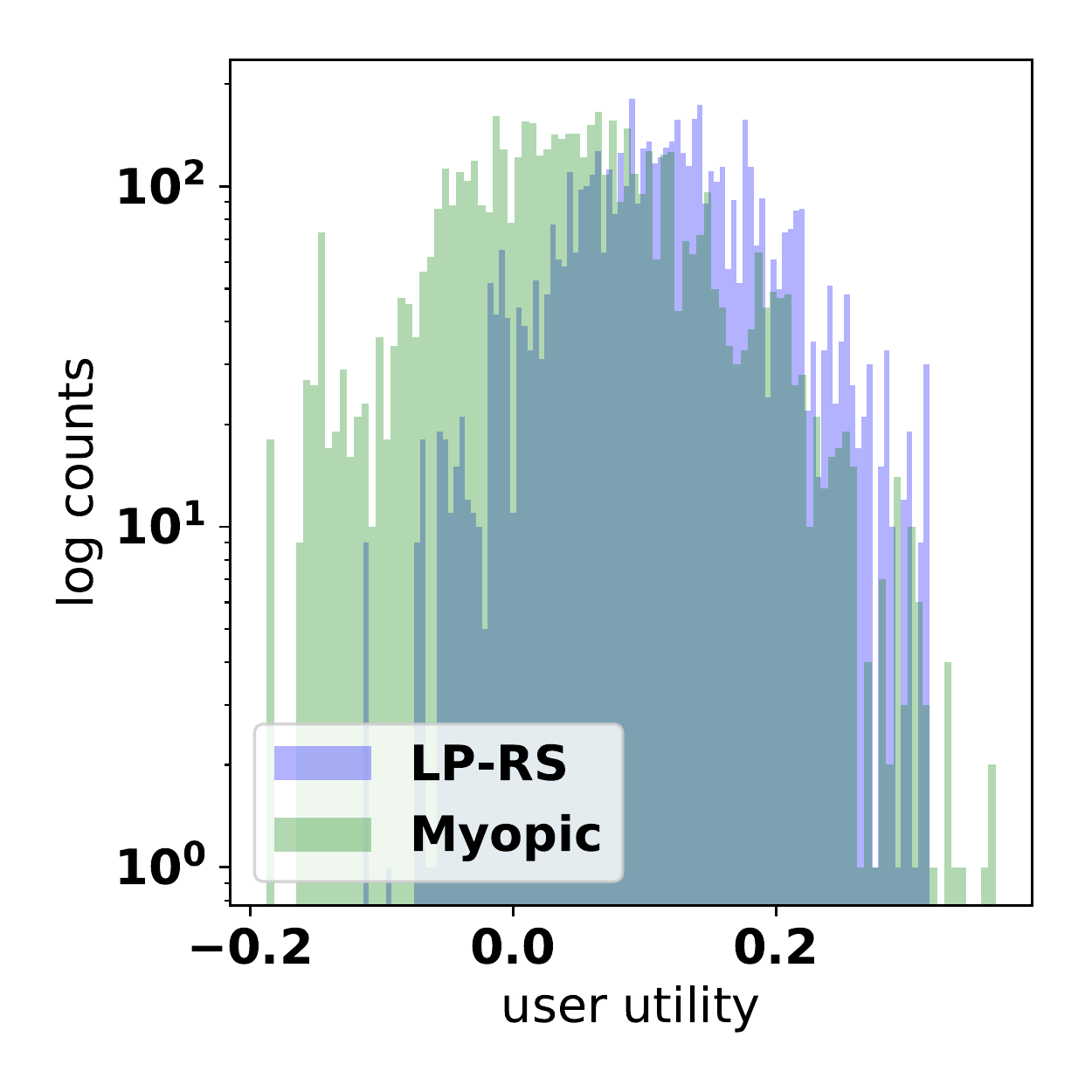}
    }
  \caption{
  SNAP embeddings.
  }\label{fig:snap_user_utility_histogram}
  \end{subfigure}%  
  \caption{
    Histogram of user utility.
  }\label{fig:user_utility_histogram}
\end{figure*}
%%%%%%%%%%%%%%%%%%%%%%%%%%%%%%%%%%%%%%%%%%%%%%%%%%%%%%%%%%%%%%%%%%%%%%%%%%%

\section{Extended Welfare-Regret Tradeoff Results}
\label{sec:extra-table}

\begin{table*}[ht!]
\centering

\begin{tabular}{lllll}
\toprule
    &        &      Avg. Welfare &        Max Regret &  Viable Providers \\
\midrule
0.1 & LP-RS &  18.02 $\pm$ 1.05 &   7.24 $\pm$ 0.77 &  47.20 $\pm$ 1.72 \\
    & Myopic &  13.49 $\pm$ 1.26 &  10.17 $\pm$ 0.20 &  11.80 $\pm$ 1.47 \\
0.18 & LP-RS &  19.79 $\pm$ 1.16 &   7.97 $\pm$ 0.84 &  47.20 $\pm$ 1.72 \\
    & Myopic &  14.83 $\pm$ 1.39 &  11.18 $\pm$ 0.22 &  11.80 $\pm$ 1.47 \\
0.26 & LP-RS &  21.87 $\pm$ 1.28 &   8.84 $\pm$ 0.91 &  46.60 $\pm$ 2.15 \\
    & Myopic &  16.42 $\pm$ 1.54 &  12.37 $\pm$ 0.25 &  11.80 $\pm$ 1.47 \\
0.35 & LP-RS &  24.30 $\pm$ 1.43 &  10.14 $\pm$ 1.18 &  45.20 $\pm$ 3.06 \\
    & Myopic &  18.29 $\pm$ 1.71 &  13.79 $\pm$ 0.27 &  11.80 $\pm$ 1.47 \\
0.43 & LP-RS &  27.17 $\pm$ 1.58 &  11.65 $\pm$ 0.91 &  44.00 $\pm$ 3.16 \\
    & Myopic &  20.50 $\pm$ 1.92 &  15.45 $\pm$ 0.31 &  11.80 $\pm$ 1.47 \\
0.51 & LP-RS &  30.44 $\pm$ 1.79 &  14.19 $\pm$ 1.46 &  44.80 $\pm$ 0.98 \\
    & Myopic &  23.08 $\pm$ 2.16 &  17.40 $\pm$ 0.35 &  11.80 $\pm$ 1.47 \\
0.59 & LP-RS &  34.30 $\pm$ 1.95 &  16.50 $\pm$ 1.73 &  43.00 $\pm$ 1.90 \\
    & Myopic &  26.07 $\pm$ 2.44 &  19.65 $\pm$ 0.39 &  11.80 $\pm$ 1.47 \\
0.67 & LP-RS &  38.67 $\pm$ 2.29 &  20.53 $\pm$ 1.46 &  42.60 $\pm$ 1.50 \\
    & Myopic &  29.51 $\pm$ 2.76 &  22.24 $\pm$ 0.44 &  11.80 $\pm$ 1.47 \\
0.75 & LP-RS &  43.69 $\pm$ 2.61 &  24.61 $\pm$ 1.31 &  41.80 $\pm$ 1.72 \\
    & Myopic &  33.44 $\pm$ 3.13 &  25.21 $\pm$ 0.50 &  11.80 $\pm$ 1.47 \\
0.84 & LP-RS &  49.51 $\pm$ 2.90 &  28.95 $\pm$ 1.11 &  39.80 $\pm$ 3.06 \\
    & Myopic &  37.91 $\pm$ 3.55 &  28.57 $\pm$ 0.57 &  11.80 $\pm$ 1.47 \\
0.92 & LP-RS &  56.15 $\pm$ 3.26 &  33.05 $\pm$ 1.16 &  36.80 $\pm$ 3.97 \\
    & Myopic &  42.94 $\pm$ 4.02 &  32.36 $\pm$ 0.64 &  11.80 $\pm$ 1.47 \\
1.0 & LP-RS &  63.62 $\pm$ 3.58 &  37.89 $\pm$ 1.39 &  34.20 $\pm$ 5.08 \\
    & Myopic &  48.59 $\pm$ 4.55 &  36.62 $\pm$ 0.73 &  11.80 $\pm$ 1.47 \\
\bottomrule
\end{tabular}

\caption{
\label{tab:extra-table}
The discounting factor $\gamma$ allows LP-RS to trade off between average user welfare and max user regret.
}
\end{table*}

Table \ref{tab:extra-table} extends the result from Table \ref{tab:tradeoffs} by including the Myopic baseline recommender.

\end{document}